\newcommand{\tagarray}{\mbox{}\refstepcounter{equation}$(\theequation)$}
\newenvironment{mathprooftree}
  {\varwidth{.9\textwidth}\centering\leavevmode}
  {\DisplayProof\endvarwidth}
\DeclarePairedDelimiterX\braket[2]{\langle}{\rangle}{#1 \delimsize\vert #2}
\DeclarePairedDelimiterX\inner[2]{\langle}{\rangle}{#1,#2}
\definecolor{Myblue}{rgb}{0,0,0.6}
\newtheorem{theorem}{Theorem}[section]
\newtheorem{lemma}[theorem]{Lemma}
\newtheoremstyle{example}{\topsep}{\topsep}
	{}
	{}
	{\bfseries}
	{.}
	{2pt}
	{\thmname{#1}\thmnumber{ #2}\thmnote{ #3}}
	\theoremstyle{example}
	\newtheorem{definition}[theorem]{Definition}
	\newtheorem{example}[theorem]{Example}
	\newtheorem{remark}[theorem]{Remark}
\def\res{\operatorname{Res}}
\def\be{\begin{equation}}
\def\ee{\end{equation}}
\DeclareMathOperator{\Cl}{Cl}
\def\inta{\bold{int}}
\def\comp{\underline{\textup{comp}}}
\def\contract{\;\lrcorner\;}
\begin{document}

\def\ScoreOverhang{1pt}

\makeatletter
\DeclareRobustCommand{\rvdots}{%
  \vbox{
    \baselineskip4\p@\lineskiplimit\z@
    \kern-\p@
    \hbox{}\hbox{.}\hbox{.}\hbox{.}
  }}
\makeatother

\newcommand{\proofvdots}[1]{\overset{\displaystyle #1}{\rvdots}}
\def\Res{\res\!}
\newcommand{\ud}[1]{\operatorname{d}\!{#1}}
\newcommand{\Ress}[1]{\res_{#1}\!}
\newcommand{\cat}[1]{\mathcal{#1}}
\newcommand{\lto}{\longrightarrow}
\newcommand{\xlto}[1]{\stackrel{#1}\lto}
\newcommand{\mf}[1]{\mathfrak{#1}}
\newcommand{\md}[1]{\mathscr{#1}}
\newcommand{\church}[1]{\underline{#1}}
\newcommand{\prf}[1]{\underline{#1}}
\newcommand{\den}[1]{\llbracket #1 \rrbracket}
\def\l{\,|\,}
\def\sgn{\textup{sgn}}
\def\cont{\operatorname{cont}}
\def\counit{\varepsilon}
\def\inta{\textbf{int}}
\def\binta{\textbf{bint}}
\def\comp{\underline{\textup{comp}}}
\def\mult{\underline{\textup{mult}}}
\def\repeat{\underline{\textup{repeat}}}
\def\contract{\;\lrcorner\;}

\title{Logic and the $2$-Simplicial Transformer}
\author{James Clift\thanks{Listing order is alphabetical. Correspondence to \texttt{d.murfet@unimelb.edu.au}.} , Dmitry Doryn\footnotemark[1] , Daniel Murfet\footnotemark[1] , James Wallbridge\footnotemark[1]}

\maketitle

\begin{abstract} We introduce the $2$-simplicial Transformer, an extension of the Transformer which includes a form of higher-dimensional attention generalising the dot-product attention, and uses this attention to update entity representations with tensor products of value vectors. We show that this architecture is a useful inductive bias for logical reasoning in the context of deep reinforcement learning.
\end{abstract}

\setlength{\epigraphwidth}{0.7\textwidth}
\epigraph{That some deductions are genuine, while others seem to be so but are not, is evident... So it is, too, with inanimate things; for of these, too, some are really silver and others gold, while others are not and merely seem to be such to our sense.}{Aristotle, \textit{Sophistical refutations}, 350 BC.}


\tableofcontents

\section{Introduction}

Deep learning has grown to incorporate a range of differentiable algorithms for computing with learned representations. The most successful examples of such representations, those learned by convolutional neural networks, are structured by the scale and translational symmetries of the underlying space (e.g. a two-dimensional Euclidean space for images). It has been suggested that in humans the ability to make rich inferences based on abstract reasoning is rooted in the same neural mechanisms underlying relational reasoning in space \cite{constantinescu,epstein,behrens,bellmund} and more specifically that abstract reasoning is facilitated by the learning of \emph{structural representations} which serve to organise other learned representations in the same way that space organises the representations that enable spatial navigation \cite{whittington,liu}. This raises a natural question: are there any ideas from mathematics that might be useful in designing general inductive biases for learning such structural representations?

As a motivating example we take the recent progress on natural language tasks based on the Transformer architecture \cite{attention} which simultaneously learns to represent both entities (typically words) and relations between entities (for instance the relation between ``cat'' and ``he'' in the sentence ``There was a cat and he liked to sleep''). These representations of relations take the form of query and key vectors governing the passing of messages between entities; messages update entity representations over several rounds of computation until the final representations reflect not just the meaning of words but also their context in a sentence. There is some evidence that the geometry of these final representations serve to organise word representations in a syntax tree, which could be seen as the appropriate analogue to two-dimensional space in the context of language \cite{hewitt}.

The Transformer may therefore be viewed as an inductive bias for learning structural representations which are \emph{graphs}, with entities as vertices and relations as edges. While a graph is a discrete mathematical object, there is a naturally associated topological space which is obtained by gluing $1$-simplices (copies of the unit interval) indexed by edges along $0$-simplices (points) indexed by vertices. There is a general mathematical notion of a \emph{simplicial set} which is a discrete structure containing a set of $n$-simplices for all $n \ge 0$ together with an encoding of the incidence relations between these simplices. Associated to each simplicial set is a topological space, obtained by gluing together vertices, edges, triangles ($2$-simplices), tetrahedrons ($3$-simplices), and so on, according to the instructions contained in the simplicial set. Following the aforementioned works in neuroscience \cite{constantinescu,epstein,behrens,bellmund,whittington,liu} and their emphasis on spatial structure, it is natural to ask if a \emph{simplicial} inductive bias for learning structural representations can facilitate abstract reasoning.

With this motivation, we begin in this paper an investigation of simplicial inductive biases for abstract reasoning in neural networks, by giving a simple method for incorporating $2$-simplices (which relate three entities) into the existing Transformer architecture. We call this the \emph{$2$-simplicial Transformer block}. It has been established in recent work \cite{santoro, zambaldi, alphastar} that relational inductive biases are useful for solving problems that draw on abstract reasoning in humans. In Section \ref{section:experiments} we show that when embedded in a deep reinforcement learning agent our $2$-simplicial Transformer block confers an advantage over the ordinary Transformer block in an environment with logical structure, and on this basis we argue that further investigation of simplicial inductive biases is warranted.
\\

\textbf{What is the $2$-simplicial Transformer block?} In each iteration of a standard Transformer block a sequence of entity representations $e_1,\ldots,e_N$ are first multiplied by weight matrices to obtain query, key and value vectors $q_i, k_i, v_i$ for $1 \le i \le N$. Updated value vectors are then computed according to the rule
\be\label{eq:1simpblock}
v'_i = \sum_{j=1}^N \frac{e^{q_i \cdot k_j}}{\sum_{s=1}^N e^{q_i \cdot k_s}} v_j\,.
\ee
These value vectors (possibly concatenated across multiple heads) are then passed through a feedforward network and layer normalisation \cite{layernorm} to compute updated representations $e'_i$ which are the outputs of the Transformer block. In each iteration of a $2$-simplicial Transformer block the updated value vector $v'_i$ also depends on higher-order terms
\be\label{eq:2simpblock}
\sum_{j,k=1}^N \frac{e^{\langle p_i, l^1_j, l^2_k \rangle}}{\sum_{s,t=1}^N e^{\langle p_i, l^1_s, l^2_t \rangle}} B(u_j \otimes u_k)
\ee
where $p_i, l^1_i, l^2_i, u_i$ is a second sequence of vectors derived from the entity representation $e_i$ via multiplication with weight matrices, $B$ is a weight tensor and the scalars $\langle p_i, l_j^1, l_k^2 \rangle$ are the \emph{$2$-simplicial attention}, viewed as logits which predict the existence of a $2$-simplex with vertices $(i,j,k)$. The scalar triple product $\langle a, b, c \rangle$ can be written explicitly in terms of the pairwise dot products of $a,b,c$ (see Definition \ref{definition_tripleprod}).

If the set of possible $2$-simplices $(i,j,k)$ is unconstrained the $2$-simplicial Transformer block has time complexity $O(N^3)$ as a function of the number of entities $N$, and this is impractical. To reduce the time complexity we introduce a set of $M$ \emph{virtual entities} which are updated by the ordinary attention, and restrict our $2$-simplices $(i,j,k)$ to have $j,k$ be virtual entities; effectively we use the ordinary Transformer to predict which entities should participate in $2$-simplices. Taking $M$ to be of the order of $\sqrt{N}$ gives the $2$-simplicial Transformer block the same complexity $O(N^2)$ as the ordinary Transformer. For a full specification of the $2$-simplicial Transformer block see Section \ref{section:simp_trans_block}. 

The architecture of our deep reinforcement learning agent largely follows \cite{zambaldi} and the details are given in Section \ref{section:agent_architecture}. The key difference between our \emph{simplicial agent} and the \emph{relational agent} of \cite{zambaldi} is that in place of a standard Transformer block we use a $2$-simplicial Transformer block. Our use of tensor products of value vectors is inspired by the semantics of linear logic in vector spaces \cite{girard_llogic,mellies,cliftmurfet2} in which an algorithm with multiple inputs computes on the tensor product of those inputs, but this is an old idea in natural language processing, used in models including the second-order RNN \cite{highorderrec,pollack,firstvsecond,secondorder}, multiplicative RNN \cite{sutskever, irsoy}, Neural Tensor Network \cite{socher2013} and the factored $3$-way Restricted Boltzmann Machine \cite{ranzato}, see Appendix \ref{section:compare_ntm}. More recently tensors have been used to model predicates in a number of neural network architectures aimed at logical reasoning \cite{serafini,neurallogicm}. The main novelty in our model lies in the introduction of the $2$-simplicial attention, which allows these ideas to be incorporated into the Transformer architecture.
\\

\textbf{What is the environment?} The environment in our reinforcement learning problem is a variant of the BoxWorld environment from \cite{zambaldi}. The original BoxWorld is played on a rectangular grid populated by keys and locked boxes, with the goal being to open the box containing the Gem (represented by a white square) as shown in the sample episode of Figure \ref{figure:intro_proof}. Locked boxes are drawn as two consecutive pixels with the lock on the right and the contents of the box (another key) on the left. Each key can only be used once. In the episode shown there is a loose pink key (marked $1$) which can be used to open one of two locked boxes, obtaining in this way either key $5$ or key $2$\footnote{The agent sees only the colours of tiles, not the numbers which are added here for exposition.}. The correct choice is $2$, since this leads via the sequence of keys $3,4$ to the Gem. All other possibilities (referred to as \emph{distractor} boxes) lead to a board configuration in which the player is unable to obtain the Gem; for further details, including the shaping of rewards, see Section \ref{section:env_std}.

Our variant of the BoxWorld environment, \emph{bridge BoxWorld}, is shown in Figure \ref{figure:multi_lock_first}. In each episode two keys are now required to obtain the Gem, and there are therefore two loose keys on the board. Beginning at each loose key is a \emph{solution path} leading to one of the keys required to open the box containing the Gem, and the eponymous bridges allow the player to cross between solution paths, thereby rendering the puzzle unsolvable. For instance, in Figure \ref{figure:multi_lock_first} opening the box marked ``bridge'' uses up the orange key, so it is only possible to obtain one of the two keys needed to open the box containing the Gem. To reach the Gem, the player must therefore learn to recognise and avoid these bridges. For further details of the bridge BoxWorld environment see Section \ref{section:env_bridge}. 
\begin{figure} 
\begin{center}
\includegraphics[scale=0.4]{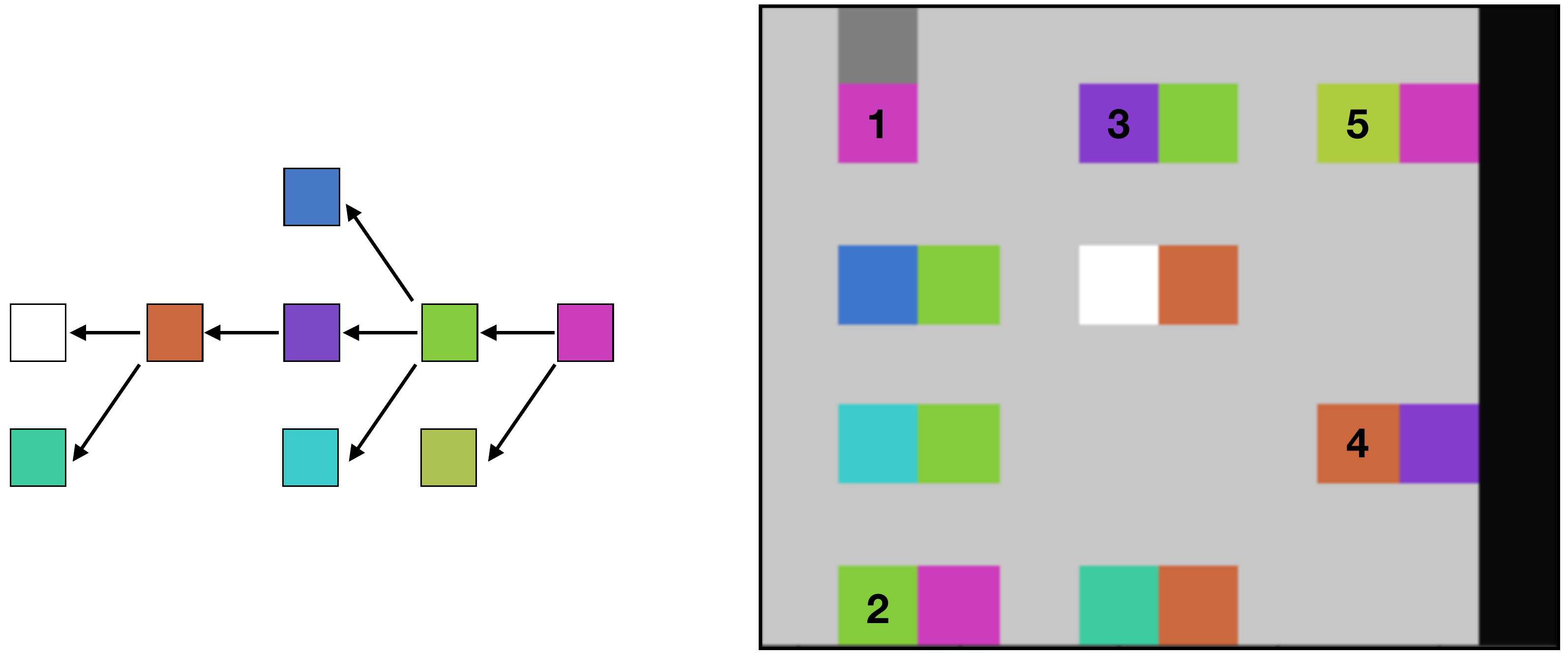}
\end{center}
\caption{\emph{Right:} a sample episode of the BoxWorld environment. Light gray tiles represent the floor, the dark gray tile is the player, the white tile is the Gem, and the rightmost column is the player inventory, currently empty. \emph{Left:} graph representation of the puzzle, with key colours as vertices and an arrow $C \lto D$ if key $C$ can be used to obtain key $D$.}
\label{figure:intro_proof}
\end{figure}
\begin{figure}
\begin{center}
\includegraphics[scale=0.4]{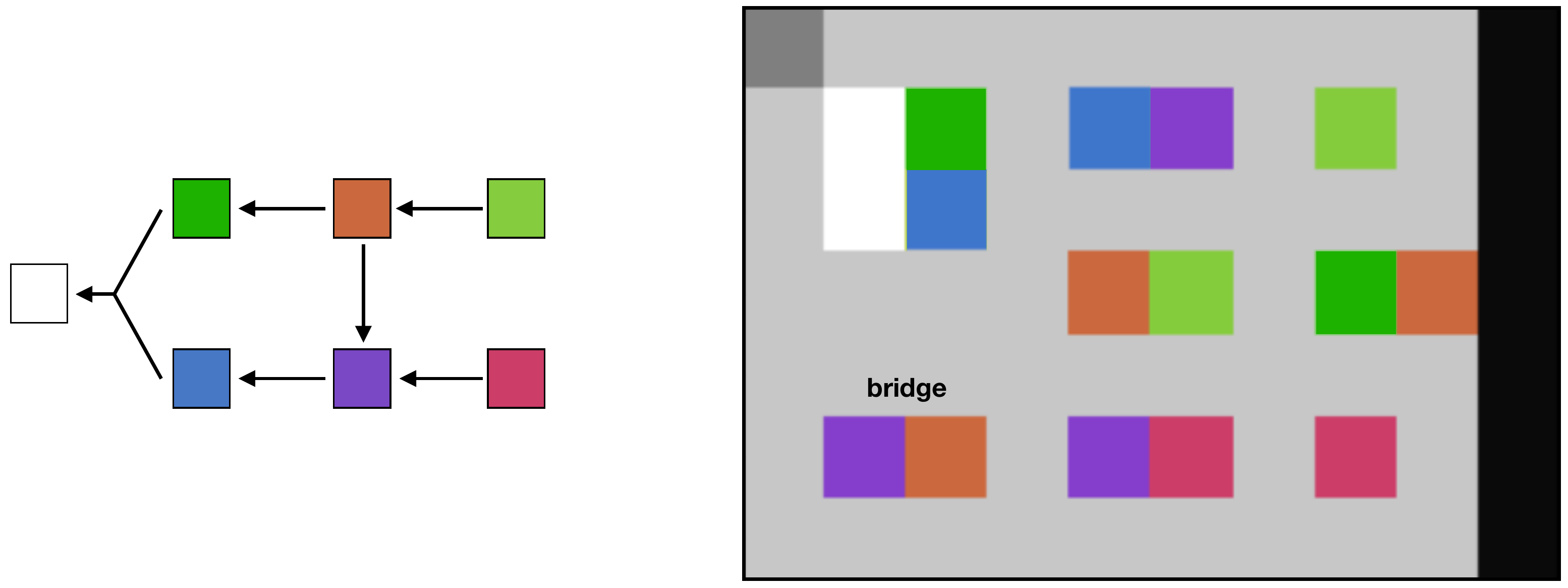}
\end{center}
\caption{\emph{Right}: a sample episode of the bridge BoxWorld environment, in which the Gem has two locks and there is a marked bridge. \emph{Left}: graph representation of the puzzle, with upper and lower solutions paths and the bridge between them.}
\label{figure:multi_lock_first}
\end{figure}

\vspace{0.5cm}

\textbf{What is the logical structure of the environment?} The design of the BoxWorld environment was intended to stress the planning and reasoning components of an agent's policy \cite[p.2]{zambaldi} and for this reason it is the underlying logical structure of the environment (rather than its representation in terms of coloured keys) that is of central importance. To explain this logical structure we introduce the following notation: given a colour $c$, we use $C$ to stand for the proposition that a key of this colour is \emph{obtainable}.

Each episode expresses its own set of basic facts, or axioms, about obtainability. For instance, a loose key of colour $c$ gives $C$ as an axiom, and a locked box requiring a key of colour $c$ in order to obtain a key of colour $d$ gives an axiom that at first glance appears to be the implication $C \lto D$ of classical logic. However, since a key may only be used once, this is actually incorrect; instead the logical structure of this situation is captured by the \emph{linear implication} $C \multimap D$ of linear logic \cite{girard_llogic}. With this understood, each episode of the original BoxWorld provides in visual form a set of axioms $\Gamma$ such that a strategy for obtaining the Gem is equivalent to a proof of $\Gamma \vdash \mathbb{G}$ in intuitionistic linear logic, where $\mathbb{G}$ stands for the proposition that the Gem is obtainable. There is a general correspondence in logic between strategies and proofs, which we recall in Appendix \ref{section:logic_rl}.

To describe the logical structure of bridge BoxWorld we need to encode the fact that two keys (say a green key and a blue key) are required to obtain the Gem. It is the \emph{linear conjunction} $\otimes$ of linear logic (also called the tensor product) rather than the conjunction of classical logic that properly captures the semantics. The axioms $\Gamma$ encoded in an episode of bridge BoxWorld contain a single formula of the form $X_1 \otimes X_2 \multimap \mathbb{G}$ where $x_1,x_2$ are the colours of the keys on the Gem, and again a strategy is equivalent to a proof of $\Gamma \vdash \mathbb{G}$. In conclusion, the logical structure of the original BoxWorld consists of a fragment of linear logic containing only the connective $\multimap$, while bridge BoxWorld captures a slightly larger fragment containing $\multimap$ and $\otimes$. The problem faced by the agent is to learn, purely through interaction, this underlying logical structure.
\\

\emph{Acknowledgements.} We acknowledge support from the Nectar cloud at the University of Melbourne and GCP research credits.

\section{$2$-Simplicial Transformer}

The Transformer architecture introduced in \cite{attention} builds on a history of attention mechanisms that in the context of natural language processing go back to \cite{bahdanau}. For general surveys of soft attention mechanisms in deep learning see \cite{cho2015,distill} and \cite[\S 12.4.5.1]{dlbook}. The fundamental idea, of propagating information between nodes using weights that depend on the dot product of vectors associated to those nodes, comes ultimately from statistical mechanics via the Hopfield network, see Remark \ref{remark:hopfield}. We distinguish between the \emph{Transformer architecture} which contains a word embedding layer, an encoder and a decoder, and the \emph{Transformer block} which is the sub-model of the encoder that is repeated.

In this section we first review the definition of the ordinary Transformer block (Section \ref{section:ordinary_transformer}) and then explain the $2$-simplicial Transformer block (Section \ref{section:simp_trans_block}). Both blocks define operators on sequences $e_1,\ldots,e_N$ of \emph{entity representations}. Strictly speaking the entities are indices $1 \le i \le N$ but we sometimes identify the entity $i$ with its representation $e_i$. The space of entity representations is denoted $V$, while the space of query, key and value vectors is denoted $H$. We use only the vector space structure on $V$, but $H = \mathbb{R}^d$ is an inner product space with the usual dot product pairing $(h,h') \mapsto h \cdot h'$ and in defining the $2$-simplicial Transformer block we will use additional algebraic structure on $H$, including the ``multiplication'' tensor $B: H \otimes H \lto H$ of \eqref{eq:Bformula} (used to propagate tensor products of value vectors) and the Clifford algebra of $H$ (used to define the $2$-simplicial attention).

\subsection{Transformer block}\label{section:ordinary_transformer}

In the first step of the standard Transformer block we generate from each entity $e_i$ a tuple of vectors via a learned linear transformation $E: V \lto H^{\oplus 3}$. These vectors are referred to respectively as \emph{query}, \emph{key} and \emph{value} vectors and we write
\be
(q_i, k_i, v_i) = E(e_i)\,.\label{eq:E_weight_matrix}
\ee
Stated differently, $q_i = W^Q e_i, k_i = W^K e_i, v_i = W^V e_i$ for weight matrices $W^Q, W^K, W^V$. In the second step we compute a refined value vector for each entity
\be
v'_i = \sum_{j=1}^N \frac{e^{q_i \cdot k_j}}{\sum_{s=1}^N e^{q_i \cdot k_s}} v_j = \sum_{j=1}^N \operatorname{softmax}( q_i \cdot k_1, \ldots, q_i \cdot k_N )_j v_j\,.
\ee
Finally, the new entity representation $e'_i$ is computed by the application of a feedforward network $g_\theta$, layer normalisation \cite{layernorm} and a skip connection
\be\label{eq:update_entity}
e'_i = \operatorname{LayerNorm}\big( g_\theta( v'_i ) + e_i \big)\,.
\ee
We refer to this form of attention as \emph{$1$-simplicial attention}.

\begin{remark} At the beginning of training the query and key vectors are random vectors in $H = \mathbb{R}^d$, which are orthogonal in expectation if $d$ is sufficiently large. We therefore expect that without any training $v'_i \approx \sum_{j=1}^N \frac{1}{N} v_j$. Suppose that for each entity $e_i$ there is a single entity $e_{f(i)}$ with information useful for the training objective. Then as learning progresses the random configuration $\{ (q_i, k_i) \}_{i=1}^N$ will vary so that the query $q_i$ for entity $e_i$ lies in the same direction as the key $k_{f(i)}$ of $e_{f(i)}$ and $q_i$ is orthogonal to $k_j$ for $j \neq f(i)$. Then $\tfrac{e^{q_i \cdot k_{f(i)}}}{\sum_{l=1}^N e^{q_i \cdot k_l}}$ will dominate the distribution and $v'_i \approx v_{f(i)}$.
\end{remark}

\begin{remark}\label{remark:hopfield} The continuous Hopfield network \cite{hopfield} \cite[Ch.42]{mackay} with $N$ nodes updates in each timestep a sequence of vectors $\{ e_i \}_{i=1}^N$ by the rules
\be\label{eq:hopfield_update}
e'_i = \operatorname{tanh}\Big[ \eta \sum_j \left( e_i \cdot e_j \right) e_j \Big]
\ee
for some parameter $\eta$. The Transformer block may therefore be viewed as a refinement of the Hopfield network, in which the three occurrences of entity vectors in \eqref{eq:hopfield_update} are replaced by query, key and value vectors $W^Q e_i, W^K e_j, W^V e_j$ respectively, the nonlinearity is replaced by a feedforward network with multiple layers, and the dynamics are stabilised by layer normalisation. The initial representations $e_i$ also incorporate information about the underlying lattice, via the positional embeddings.
\end{remark}

\begin{remark} In \emph{multiple-head} attention with $K$ heads, there are $K$ channels along which to propagate information between every pair of entities, each of dimension $\dim(H)/K$. More precisely, we choose a decomposition $H = H_1 \oplus \cdots \oplus H_K$ so that
\[
E: V \lto \bigoplus_{u=1}^K( H_u^{\oplus 3} )
\]
and write
\[
(q_{i,(1)}, k_{i,(1)}, v_{i,(1)}, \ldots, q_{i,(K)}, k_{i,(K)}, v_{i,(K)}) = E(e_i)\,.
\]
To compute the output of the attention, we take a direct sum of the value vectors propagated along every one of these $K$ channels, as in the formula
\be\label{eq:update_multhead}
e'_i = \operatorname{LayerNorm}\Big(g_\theta\Big[ \bigoplus_{u=1}^K \sum_{j=1}^N \operatorname{softmax}( q_{i,(u)} \cdot k_{1,(u)}, \ldots, q_{i,(u)} \cdot k_{N,(u)} )_j v_{j,(u)} \Big] + e_i \Big)\,.
\ee
\end{remark}

\begin{remark} In the introduction we referred to the idea that a Transformer model learns \emph{representations of relations}. To be more precise, these representations are \emph{heads}, each of which determines an independent set of transformations $W^Q, W^K, W^V$ which extract queries, keys and values from entities. Thus a head determines not only which entities are related (via $W^Q, W^K$) but also what information to transmit between them (via $W^V$).

The idea that the structure of a sentence acts to \emph{transform} the meaning of its parts is due to Frege \cite{frege_sensedenotation} and underlies the denotational semantics of logic. From this point of view the Transformer architecture is an inheritor both of the logical tradition of denotational semantics, and of the statistical mechanics tradition via Hopfield networks.
\end{remark}

\subsection{$2$-Simplicial Transformer block}\label{section:simp_trans_block}

In combinatorial topology the canonical one-dimensional object is the $1$-simplex (or edge) $j \lto i$ and the canonical two-dimensional object is the $2$-simplex (or triangle) which we may represent diagrammatically in terms of indices $i,j,k$ as
\be\label{eq:2simplex}
\xymatrix@C+2pc{
& j \ar[dr]\\
k \ar[ur] \ar[rr] & & i
}
\ee
It is natural to apply the theory of simplicial sets and simplicial complexes, which are the mathematical objects which organise collections of $n$-simplices, in the context of learning representations in computer vision and machine learning, and indeed this has been done \cite{chazal,lambiotte,simplex2vec}. To the extent that our approach differs from these earlier works, the difference lies in the fact that our simplices arise from the geometric algebra of configurations of query and key vectors, which seems to be an emerging idiom within deep learning.

In the $2$-simplicial Transformer block, in addition to the $1$-simplicial contribution, each entity $e_i$ is updated as a function of pairs of entities $e_j, e_k$ using the tensor product of value vectors $u_j \otimes u_k$ and a probability distribution derived from a scalar triple product $\langle p_i, l^1_j, l^2_k \rangle$ in place of the scalar product $q_i \cdot k_j$. This means that we associate to each entity $e_i$ a four-tuple of vectors via a learned linear transformation $E: V \lto H^{\oplus 4}$, denoted
\be\label{eq:querykeyvalue_simp}
(p_i, l^1_i, l^2_i, u_i) = E(e_i)\,.
\ee
We still refer to $p_i$ as the \emph{query}, $l^1_i, l^2_i$ as the \emph{keys} and $u_i$ as the \emph{value}. Stated differently, $p_i = W^P e_i, l^1_i = W^{L_1} e_i, l^2_i = W^{L_2} e_i$ and $u_i = W^U e_i$ for weight matrices $W^P, W^{L_1}, W^{L_2}, W^U$.

\begin{definition}\label{definition_tripleprod} The \emph{unsigned scalar triple product} of $a,b,c \in H$ is
\be\label{eq:scalar_triple}
\langle a, b, c \rangle = \big\| (a \cdot b)c - (a \cdot c)b + (b \cdot c)a \big\|
\ee
whose square is a polynomial in the pairwise dot products
\be\label{eq:unsigned_scalar_triple_square}
\langle a, b, c \rangle^2 = (a \cdot b)^2(c \cdot c) + (b \cdot c)^2 (a \cdot a) + (a \cdot c)^2 (b \cdot b) - 2 (a \cdot b)(a \cdot c)(b \cdot c)\,.
\ee
\end{definition}

This scalar triple product has a simple geometric interpretation in terms of the volume of the tetrahedron with vertices $0,a,b,c$. To explain, recall that the triangle spanned by two unit vectors $a,b$ in $\mathbb{R}^2$ has an area $A$ given by the following formula:

\begin{center}
\begin{tabular}{ >{\centering}m{6cm} >{\centering}m{5cm} >{\centering}m{3cm}}
\includegraphics[scale=0.55]{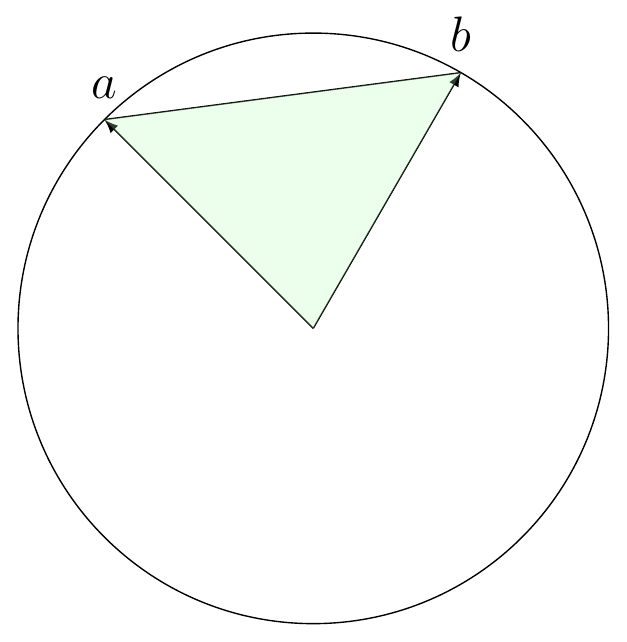}
&
$(a \cdot b)^2 = 1 - 4A^2$
&
\tagarray{\label{area_picture}}
\end{tabular}
\end{center}
In three dimensions, the analogous formula involves the volume $V$ of the tetrahedron with vertices given by unit vectors $a,b,c$ and the scalar triple product:
\begin{center}
\begin{tabular}{ >{\centering}m{6cm} >{\centering}m{5cm} >{\centering}m{3cm}}
\includegraphics[scale=0.55]{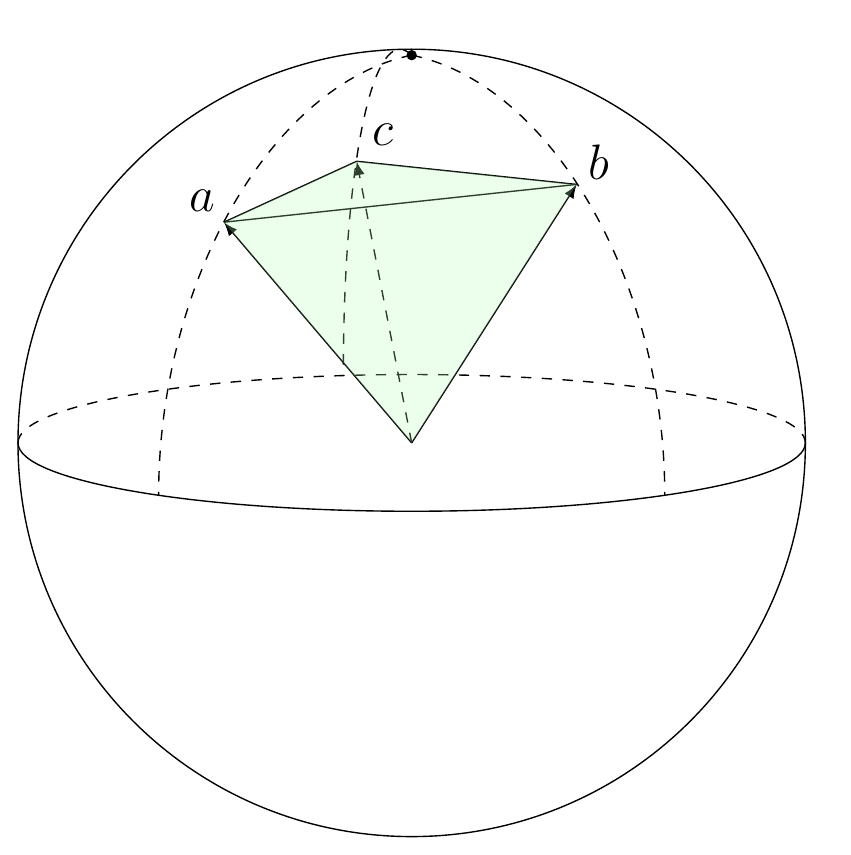}
&
$\langle a,b,c \rangle^2 = 1 - 36V^2$
&
\tagarray{\label{area_volume}}
\end{tabular}
\end{center}
In general, given nonzero vectors $a,b,c$ let $\hat{a},\hat{b},\hat{c}$ denote unit vectors in the same directions. We can by Lemma \ref{lemma:proof_atte_prop}(v) factor out the length in the scalar triple product
\be\label{eq:norms_in_triple}
\langle a, b, c \rangle = \| a \| \| b \| \| c \| \langle \hat{a}, \hat{b}, \hat{c} \rangle
\ee
so that a general scalar triple product can be understood in terms of the vector norms and configurations of three points on the $2$-sphere. One standard approach to calculating volumes of such tetrahedrons is the cross product which is only defined in three dimensions. Since the space of representations $H$ is high dimensional the natural framework for the triple scalar product $\langle a, b, c \rangle$ is instead the Clifford algebra of $H$ (see Appendix \ref{section:clifford}). 

For present purposes, we need to know that $\langle a, b, c \rangle$ attains its minimum value (which is zero) when $a,b,c$ are pairwise orthogonal, and attains its maximum value (which is $\| a \| \| b \| \| c \|$) if and only if $\{ a, b, c \}$ is linearly dependent (Lemma \ref{lemma:proof_atte_prop}). Using the number $\langle p_i, l^1_j, l^2_k \rangle$ as a measure of the degree to which entity $i$ is attending to $(j,k)$, or put differently, the degree to which the network predicts the existence of a $2$-simplex $(i,j,k)$, the update rule for the entities when using purely $2$-simplicial attention is
\be\label{eq:Bformula}
v'_i = \sum_{j,k=1}^N \frac{e^{\langle p_i, l^1_j, l^2_k \rangle}}{\sum_{s,t=1}^N e^{\langle p_i, l^1_s, l^2_t \rangle}} B(u_j \otimes u_k)
\ee
where $B: H \otimes H \lto H$ is a learned linear transformation. Although we do not impose any further constraints, the motivation here is to equip $H$ with the structure of an algebra; in this respect we model conjunction by multiplication, an idea going back to Boole \cite{boole}. 

We compute multiple-head $2$-simplicial attention in the same way as in the $1$-simplicial case. To combine $1$-simplicial heads (that is, ordinary Transformer heads) and $2$-simplicial heads we use separate inner product spaces $H^1, H^2$ for each simplicial dimension, so that there are learned linear transformations $E^1 : V \lto (H^1)^{\oplus 3}, E^2 : V \lto (H^2)^{\oplus 4}$ and the queries, keys and values are extracted from an entity $e_i$ according to
\begin{gather*}
(q_i, k_i, v_i) = E^1(e_i)\,,\\
(p_i, l^1_i, l^2_i, u_i) = E^2(e_i)\,.
\end{gather*}
The update rule (for a single head in each simplicial dimension) is then:
\begin{gather}
v'_i = \Big\{ \sum_{j=1}^N \frac{e^{q_i \cdot k_j}}{\sum_{s=1}^N e^{q_i \cdot k_s}} v_j \Big\} \oplus \operatorname{LayerNorm}\Big\{ \sum_{j,k=1}^N \frac{e^{\langle p_i, l^1_j, l^2_k \rangle}}{\sum_{s,t=1}^N e^{\langle p_i, l^1_s, l^2_t \rangle}} B(u_j \otimes u_k) \Big\}\,,\label{eq:efrmov0} \\
e'_i = \operatorname{LayerNorm}\big( g_\theta(v'_i) + e_i \big)\,. \label{eq:efromv}
\end{gather}
Regarding the layer normalisation on the output of the $2$-simplicial head see Remark \ref{remark:layer_norm_output2}. If there are $K_1$ heads of $1$-simplicial attention and $K_2$ heads of $2$-simplicial attention, then \eqref{eq:efrmov0} is modified in the obvious way using $H^1 = \bigoplus_{u=1}^{K_1} H^1_u$ and $H^2 = \bigoplus_{u=1}^{K_2} H^2_u$.

The time complexity of $1$-simplicial attention as a function of the number of entities is $O(N^2)$ while the time complexity of $2$-simplicial attention is $O(N^3)$ since we have to calculate the attention for every triple $(i,j,k)$ of entities. For this reason we consider only triples $(i,j,k)$ where the base of the $2$-simplex $(j,k)$ is taken from a set of pairs predicted by the ordinary attention, which we view as the primary locus of computation. More precisely, we introduce in addition to the $N$ entities (now referred to as \emph{standard} entities) a set of $M$ \emph{virtual} entities $e_{N+1},\ldots,e_{N+M}$. These virtual entities serve as a ``scratch pad'' onto which the iterated ordinary attention can write representations, and we restrict $j,k$ to lie in the range $N < j,k \le N + M$ so that only value vectors obtained from virtual entities are propagated by the $2$-simplicial attention.

With virtual entities the update rule is for $1 \le i \le N$
\be\label{eq:update_rule_ent}
v'_i = \Bigg\{ \sum_{j=1}^N \frac{e^{q_i \cdot k_j}}{\sum_{s=1}^N e^{q_i \cdot k_s}} v_j \Bigg\} \oplus \operatorname{LayerNorm}\Bigg\{ \sum_{j,k=N+1}^{N+M} \frac{e^{\langle p_i, l^1_j, l^2_k \rangle}}{\sum_{s,t=1}^{N+M} e^{\langle p_i, l^1_l, l^2_m \rangle}} B(u_j \otimes u_k) \Bigg\}
\ee
and for $N < i \le N + M$
\be\label{eq:1_simp_virtual_update}
v'_i = \Bigg\{\sum_{j=1}^{N+M} \frac{e^{q_i \cdot k_j}}{\sum_{s=1}^{N+M} e^{q_i \cdot k_s}} v_j \Bigg\} \oplus \operatorname{LayerNorm}(u_i)\,.
\ee
The updated representation $e'_i$ is computed from $v'_i,e_i$ using \eqref{eq:efromv} as before. Observe that the virtual entities are not used to update the standard entities during $1$-simplicial attention and the $2$-simplicial attention is not used to update the virtual entities; instead the second summand in \eqref{eq:1_simp_virtual_update} involves the vector $u_i = W^U e_i$, which adds recurrence to the update of the virtual entities. After the attention phase the virtual entities are discarded.

The method for updating the virtual entities is similar to the role of the memory nodes in the relational recurrent architecture of \cite{santoro_relational}, the master node in \cite[\S 5.2]{mpnn} and memory slots in the Neural Turing Machine \cite{ntm}. The update rule has complexity $O(NM^2)$ and so if we take $M$ to be of order $\sqrt{N}$ we get the desired complexity $O(N^2)$. 


\begin{remark}\label{remark:keys_are_important} In the dot product attention $q \cdot k = \| q \| \| k \| \cos(\theta)$ the norms of the queries and keys affect the distribution
\be
\Big\{ \frac{e^{q_i \cdot k_j}}{\sum_l e^{q_i \cdot k_l}} \Big\}_{j=1}^N = \left\{ \frac{e^{\| q_i \|\| k_j \| \cos(\theta_{ij})}}{\sum_l e^{q_i \cdot k_l}} \right\}_{j=1}^N
\ee
in different ways. Since $i$ is fixed, $\| q_i \|$ acts like the inverse temperature in the Boltzmann distribution: increasing the query norm decreases the entropy of the distribution. On the other hand, if $\| k_j \|$ is large then, all else being equal, any entity $i$ which attends to $j$ will do so strongly, and in this sense the key norm is a measure of the \emph{importance} of entity $j$. Using \eqref{eq:norms_in_triple} a similar interpretation applies to the $2$-simplicial attention.
\end{remark}

\section{Environment}\label{section:environment}

Our environment is an extension of the BoxWorld environment of \cite{zambaldi}, see also \cite{santoro_relational,guez}, implemented as an OpenAI gym environment \cite{openaigym}. We begin by explaining our implementation of the original BoxWorld, and then we explain the extension used in our experiments. The code for our implementation of the original BoxWorld environment, and of bridge BoxWorld, is available online \cite{repo}.

\subsection{Standard BoxWorld}\label{section:env_std}

The standard BoxWorld environment is a rectangular grid in which are situated the \emph{player} (a dark gray tile) and a number of \emph{locked boxes} represented by a pair of horizontally adjacent tiles with a tile of colour $x$, the \emph{key colour}, on the left and a tile of colour $y$, the \emph{lock colour}, on the right. There is also one \emph{loose key} in each episode, which is a coloured tile not initially adjacent to any other coloured tile. All other tiles are blank (light gray) and are traversable by the player. The rightmost column of the screen is the \emph{inventory}, which fills from the top and contains keys that have been collected by the player. The player can pick up any loose key by walking over it. In order to open a locked box, with key and lock colours $x,y$ as above, the player must step on the lock while in possession of a copy of $y$, in which case one copy of this key is removed from the inventory and replaced by a key of colour $x$. The goal is to attain a white key, referred to as the \emph{Gem}. 

Some locked boxes, if opened, provide keys that are not useful for attaining the Gem. Since each key may only be used once, opening such boxes means the episode is rendered unsolvable. Such boxes are called \emph{distractors}. An episode ends when the player either obtains the Gem (with a reward of $+10$) or opens a distractor box (reward $-1$). Opening any non-distractor box, or picking up a loose key, garners a reward of $+1$. The \emph{solution length} is the number of locked boxes (including the one with the Gem) in the episode on the path from the loose key to the Gem. The episode in Figure \ref{figure:intro_proof} has solution length four. Episodes are parametrised by the solution length and the number of distractors.

\subsection{Bridge BoxWorld}\label{section:env_bridge}

In our extension of the original BoxWorld environment, the box containing the Gem has two locks (of different colours). To obtain the Gem, the player must step on either of the lock tiles with both keys in the inventory, at which point the episode ends with the usual $+10$ reward. Graphically, Gems with multiple locks are denoted with two vertical white tiles on the left, and the two lock tiles on the right; see Figure \ref{figure:multi_lock_first}.

Two solution paths (of the same length) leading to each of the locks on the Gem are generated with no overlapping colours, beginning with two loose keys. In episodes with multiple locks we do not consider distractor boxes of the old kind; instead there is a new type of distractor that we call a \emph{bridge}. This is a locked box whose lock colour is taken from one solution branch and whose key colour is taken from the other branch. Opening the bridge renders the puzzle unsolvable. An episode ends when the player either obtains the Gem (reward $+10$) or opens a bridge (reward $-1$). Opening a box other than the bridge, or picking up a loose key, has a reward of $+1$ as before. In this paper we consider episodes with zero or one bridge (the player cannot fail to solve an episode with no bridge).

\section{Agent}\label{section:agent_architecture}

Our baseline relational agent is modeled closely on \cite{zambaldi} except that we found that a different arrangement of layer normalisations worked better in our experiments, see Remark \ref{remark:layer_norms_dif}. The code for our implementation of both agents is available online \cite{repo}.

\subsection{Basic architecture}

In the following we describe the network architecture of both the relational and simplicial agent; we will note the differences between the two models as they arise. The input to the agent's network is an RGB image, represented as a tensor of shape $[R, C+1, 3]$ (i.e. an element of $\mathbb{R}^{R} \otimes \mathbb{R}^{C+1} \otimes \mathbb{R}^3$) where $R$ is the number of rows and $C$ the number of columns (the $C+1$ is due to the inventory). This tensor is divided by $255$ and then passed through a $2 \times 2$ convolutional layer with $12$ features, and then a $2 \times 2$ convolutional layer with $24$ features. Both activation functions are ReLU and the padding on our convolutional layers is ``valid'' so that the output has shape $[R-2,C-1,24]$. We then multiply by a weight matrix of shape $24 \times 62$ to obtain a tensor of shape $[R-2,C-1,62]$. Each feature vector has concatenated to it a two-dimensional positional encoding, and then the result is reshaped into a tensor of shape $[N, 64]$ where $N = (R-2)(C-1)$ is the number of Transformer entities. This is the list $(e_1,\ldots,e_N)$ of entity representations $e_i \in V = \mathbb{R}^{64}$.

In the case of the simplicial agent, a further two learned embedding vectors $e_{N+1}, e_{N+2}$ are added to this list; these are the virtual entities. So with $M = 0$ in the case of the relational agent and $M = 2$ for the simplicial agent, the entity representations form a tensor of shape $[N+M, 64]$. This tensor is then passed through two iterations of the Transformer block (either purely $1$-simplicial in the case of the relational agent, or including both $1$ and $2$-simplicial attention in the case of the simplicial agent). In the case of the simplicial agent the virtual entities are then discarded, so that in both cases we have a sequence of entities $e_1'', \ldots, e''_{N}$.

To this final entity tensor we apply max-pooling over the entity dimension, that is, we compute a vector $v \in \mathbb{R}^{64}$ by the rule $v_i = \max_{1 \le j \le N} (e''_j)_i$ for $1 \le i \le 64$. This vector $v$ is then passed through four fully-connected layers with $256$ hidden nodes and ReLU activations. The output of the final fully-connected layer is multiplied by one $256 \times 4$ weight matrix to produce logits for the actions (left, up, right and down) and another $256 \times 1$ weight matrix to produce the value function.

\subsection{Transformer blocks}

The input to our Transformer blocks are tensors of shape $[N, 64]$, and the outputs have the same shape. Inside each block are two feedforward layers separated by a ReLU activation with $64$ hidden nodes; the weights are shared between iterations of the Transformer block. The pseudo-code for the ordinary Transformer block inside the relational agent is:
\newpage

{\small{
\begin{lstlisting}
def transformer_block(e):
    x = LayerNorm(e)
    a = 1SimplicialAttention(x)
    b = DenseLayer1(a) 
    c = DenseLayer2(b)
    r = Add([e,c])
    eprime = LayerNorm(r)
    return eprime
\end{lstlisting}
}}

In the $2$-simplicial Transformer block the input tensor, after layer normalisation, is passed through the $2$-simplicial attention and the result (after an additional layer normalisation) is concatenated to the output of the $1$-simplicial attention heads before being passed through the feedforward layers:

{\small{
\begin{lstlisting}
def simplicial_transformer_block(e):
    x = LayerNorm(e)
    a1 = 1SimplicialAttention(x) 
    a2 = 2SimplicialAttention(x)
    a2n = LayerNorm(a2)
    ac = Concatenate([a1,a2n])
    b = DenseLayer1(ac) 
    c = DenseLayer2(b)
    r = Add([e,c])
    eprime = LayerNorm(r)
    return eprime
\end{lstlisting}
}}

Our implementation of the standard Transformer block is based on an implementation in Keras from \cite{kpot}. In the reported experiments we use only two Transformer blocks; we performed two trials of a relational agent using four Transformer blocks, but after $5.5\times 10^9$ timesteps neither trial exceeded the $0.85$ plateau in terms of fraction solved. In both the relational and simplicial agent, the space $V$ of entity representations has dimension $64$ and we denote by $H^1, H^2$ the spaces of $1$-simplicial and $2$-simplicial queries, keys and values. In both the relational and simplicial agent there are two heads of $1$-simplicial attention, $H^1 = H^1_1 \oplus H^1_2$ with $\dim(H^1_{i}) = 32$. In the simplicial agent there is a single head of $2$-simplicial attention with $\dim(H^2) = 48$ and two virtual entities.

\begin{remark}\label{remark:layer_norm_output2}
Without the additional layer normalisation on the output of the $2$-simplicial attention we find that training is unstable. The natural explanation is that these outputs are constructed from polynomials of higher degree than the $1$-simplicial attention, and thus computational paths that go through the $2$-simplicial attention will be more vulnerable to exploding or vanishing gradients. Ignoring denominators in the softmax and layer normalisations, the contribution to $e'_i$ from the $1$-simplicial head in \eqref{eq:efromv},\eqref{eq:update_rule_ent} is
\be\label{eq:quote_1}
\sum_j (1 + (q_i \cdot k_j) + \tfrac{1}{2} (q_i \cdot k_j)^2 + \cdots )v_j
\ee
while the contribution from the $2$-simplicial head is
\be\label{eq:quote_2}
\sum_{j,k} (1 + \langle p_i, l^1_j, l^2_k \rangle + \tfrac{1}{2}\langle p_i, l^1_j, l^2_k \rangle^2 + \cdots) B( u_j \otimes u_k )\,.
\ee
The lowest order term in \eqref{eq:quote_1} is $v_j$ which is linear in the components of the entity vector $e_i$, whereas the lowest order term of \eqref{eq:quote_2} is $B(u_j \otimes u_k)$ which is quadratic. 
\end{remark}

\begin{remark}\label{remark:layer_norms_dif} There is wide variation in the use of layer normalisation in the literature on Transformer models, compare \cite{attention, sparse, zambaldi}. The architecture described in \cite{zambaldi} involves layer normalisation in two places: on the concatenation of the $Q,K,V$ matrices, and on the output of the feedforward network $g_\theta$. We keep this second normalisation but move the first from \emph{after} the linear transformation $E$ of \eqref{eq:E_weight_matrix} to \emph{before} this linear transformation, so that it is applied directly to the incoming entity representations.

We found that this works well, but the arrangement is strange in that there are two consecutive layer normalisations between iterations of the Transformer block. Note that if two layer normalisations with bias-gain pairs $(b,g),(b',g')$ are applied in succession, then the output of the first normalisation layer on input $x$ with mean $b$ and variance $\sigma^2$ will be $x' = \frac{g}{\sigma}(x - \mu) + b$ and passing this through the second normalisation layer yields the output $x'' = \frac{g'}{\sigma}( x - \mu ) + b' -\tfrac{g'b}{g}$ which is equivalent to a single layer normalisation with a pair $(b' - \tfrac{g'}{g} b, g')$. It is possible the appearance of the parameter $g$ in a nonlinear way provides a useful reparametrisation of the network \cite[\S 8.7.1]{dlbook}.
\end{remark}

\section{Experiments}\label{section:experiments}

The training of our agents uses the implementation in Ray RLlib \cite{rllib} of the distributed off-policy actor-critic architecture IMPALA of \cite{impala} with optimisation algorithm RMSProp. The hyperparameters for IMPALA and RMSProp are given in Table \ref{table:hyper}. Following \cite{zambaldi} and other recent work in deep reinforcement learning we use RMSProp with a large value of the hyperparameter $\varepsilon = 0.1$, which is \emph{a priori} quite strange. However, as we explain in Appendix \ref{section:rmsprop}, this is effectively a variant of RMSProp with smoothed gradient clipping. 

\begin{table}[h]
\scriptsize
    \begin{center}
    \begin{tabular}
    {ll}
    \toprule
      \textbf{Hyperparameter}  & \textbf{Value} \\ 
    \midrule
    IMPALA entropy & $5 \times 10^{-3}$\\
    Discount factor $\gamma$ & 0.99\\
    Unroll length & 40 timesteps\\
    Batch size & 1280 timesteps\\
    Learning rate & $2 \times 10^{-4}$\\
    RMSProp momentum & 0 \\
    RMSProp $\varepsilon$ & 0.1 \\
    RMSProp decay & 0.99\\
    \bottomrule

    \end{tabular}
    \end{center}
    \caption{\footnotesize Hyperparameters for agent training.}
    \label{table:hyper}
\end{table}

First we verified that our implementation of the relational agent can solve the standard BoxWorld environment \cite{zambaldi} with a solution length sampled from $[1,5]$ and number of distractors sampled from $[0,4]$ on a $9\times 9$ grid. After training for $2.35 \times 10^9$ timesteps our implementation solved over $93\%$ of puzzles (regarding the discrepancy with the reported sample complexity in \cite{zambaldi} see Remark \ref{remark:episode_horizon}). Next we trained the relational and simplicial agent on bridge BoxWorld, under the following conditions: half of the episodes contain a bridge, the solution length is uniformly sampled from $[1,3]$ (both solution paths are of the same length), colours are uniformly sampled from a set of $20$ colours\footnote{Saturation $0.7$, brightness $0.8$ and hue $\frac{18k}{360}$ for $0 \le k \le 19$.} and the boxes and loose keys are arranged randomly on a $7 \times 9$ grid, under the constraint that the box containing the Gem does not occur in the rightmost column or bottom row, and keys appear only in positions $(y,x) = (2r, 3c-1)$ for $1 \le r \le 3, 1 \le c \le 3$. The starting and ending point of the bridge are uniformly sampled with no restrictions (e.g. the bridge can involve the colours of the loose keys and locks on the Gem) but the lock colour is always on the top solution path. There is no curriculum and no cap on timesteps per episode. We trained four independent trials of both agents to either $5.5\times 10^9$ timesteps or convergence, whichever came first. The training runs for the relational and simplicial agents are shown in Figure \ref{figure:winrate_graph_v2} and Figure \ref{figure:winrate_graph_v8} respectively. In Figure \ref{figure:winrate_graph} we give the mean and standard deviation of these four trials of each agent, showing a clear advantage of the simplicial agent. We make some remarks about performance comparisons taking into account the fact that the relational agent is simpler (and hence faster to execute) than the simplicial agent in Appendix \ref{section:time_adjusted}.

\begin{figure}
\begin{center}
\includegraphics[scale=0.5]{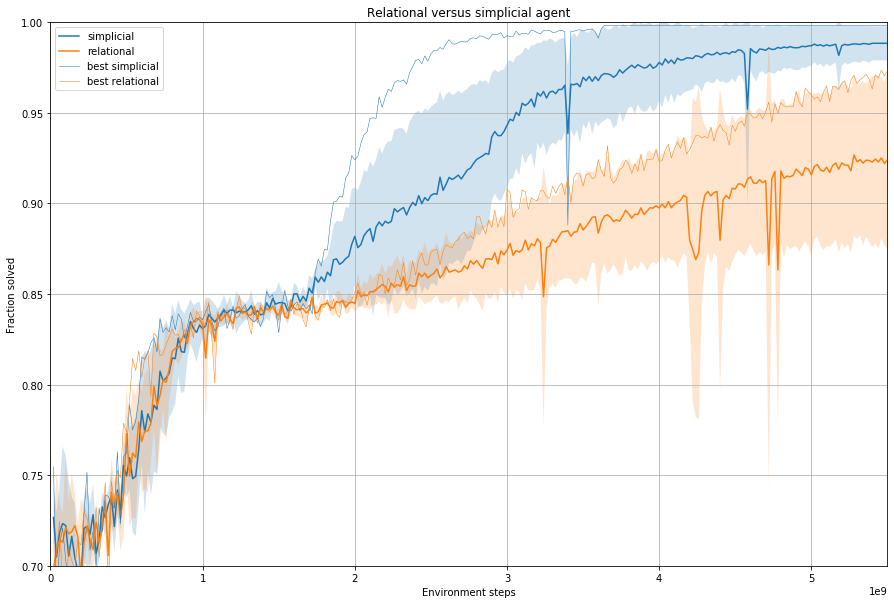}
\end{center}
\caption{Training curve of mean relational and simplicial agents on bridge BoxWorld. Shown are the mean and standard deviation of four runs of each agent.}
\label{figure:winrate_graph}
\end{figure}

\begin{figure}
\begin{center}
\includegraphics[scale=0.5]{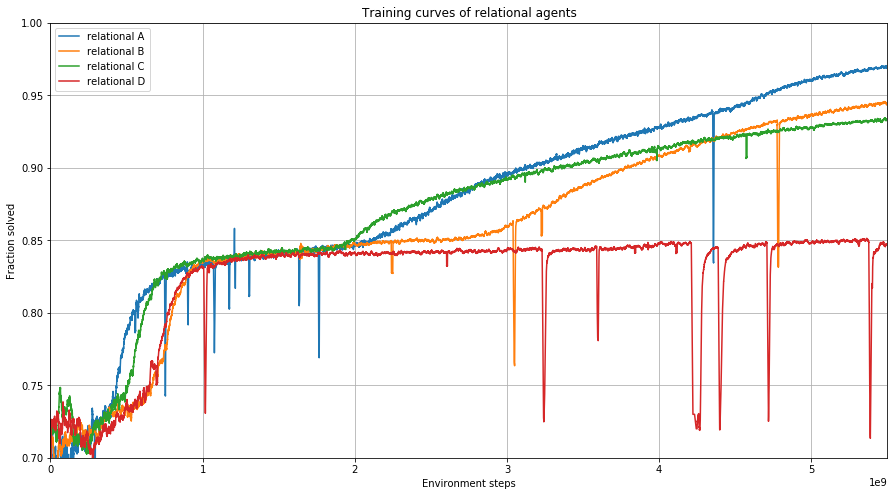}
\end{center}
\caption{Training curves for the relational agent on bridge BoxWorld.}
\label{figure:winrate_graph_v2}
\end{figure}

\begin{figure}
\begin{center}
\includegraphics[scale=0.5]{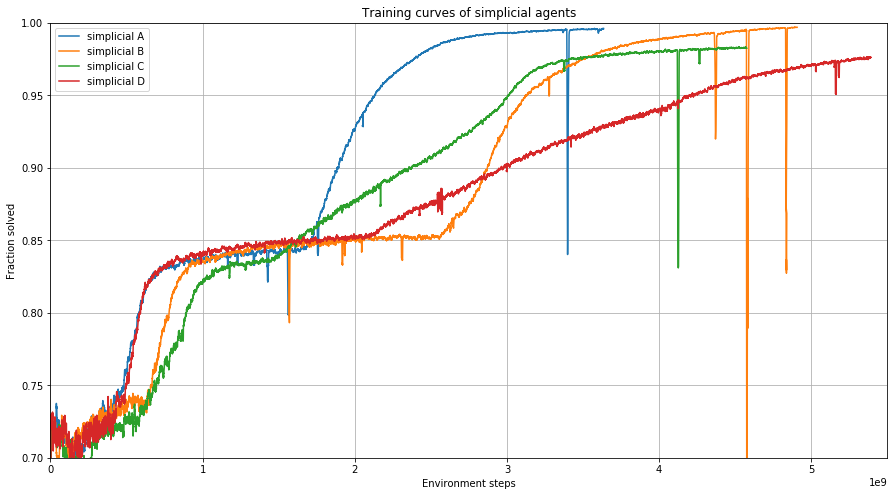}
\end{center}
\caption{Training curves for the simplicial agent on bridge BoxWorld.}
\label{figure:winrate_graph_v8}
\end{figure} 

\begin{remark}\label{remark:episode_horizon} The experiments in the original BoxWorld paper \cite{zambaldi} contain an unreported cap on timesteps per episode (an \emph{episode horizon}) of $120$ timesteps \cite{personal}. We have chosen to run our experiments without an episode horizon, and since this means our reported sample complexities diverge substantially from the original paper (some part of which it seems reasonable to attribute to the lack of horizon) it is necessary to justify this choice.

When designing an architecture for deep reinforcement learning the goal is to reduce the expected generalisation error \cite[\S 8.1.1]{dlbook} with respect to some class of similar environments. Although this class is typically difficult to specify and is often left implicit, in our case the class includes a range of visual logic puzzles involving spatial navigation, which can be solved without memory\footnote{The bridge is the unique box both of whose colours appear three times on the board. However, this is not a reliable strategy for detecting bridges for an agent without memory, because once the agent has collected some of the keys on the board, some of the colours necessary to make this deduction may no longer be present.}. A learning curriculum undermines this goal, by making our expectations of generalisation conditional on the provision of a suitable curriculum, whose existence for a given member of the problem class may not be clear in advance. The episode horizon serves as a \emph{de facto} curriculum, since early in training it biases the distribution of experience rollouts towards the initial problems that an agent has to solve (e.g. learning to pick up the loose key). In order to avoid compromising our ability to expect generalisation to similar puzzles which do not admit such a useful curriculum, we have chosen not to employ an episode horizon. Fortunately, the relational agent performs well even without a curriculum on the original BoxWorld, as our results show.
\end{remark}

\begin{remark}\label{remark:exp_hardware} Experiments were conducted either on the Google Cloud Platform with a single head node with 12 virtual CPUs and one NVIDIA Tesla P100 GPU and 192 additional virtual CPUs spread over two pre-emptible worker nodes, or on the University of Melbourne Nectar research cloud with a single head node with 12 virtual CPUs and two NVIDIA Tesla K80 GPUs, and 222 worker virtual CPUs.
\end{remark}

\section{Analysis}\label{section:analysis_strategy}

We analyse the simplicial agent, with two main goals: firstly, to establish that the agent has actually learned to use the $2$-simplicial attention, and secondly to examine the hypothesis that the agent has learned a form of logical reasoning. The results of our analysis are positive in the first case but inconclusive in the second: while the agent has clearly learned to use both the $1$-simplicial and $2$-simplicial attention, we are unable to identify the structure in the attention as a homomorphic image of a logically correct explicit strategy, and so we leave as an open question whether the agent is performing logical reasoning according to the standard elaborated in Appendix \ref{section:logic_rl}.

It will be convenient to organise episodes of bridge BoxWorld by their \emph{puzzle type}, which is the tuple $(a,b,c)$ where $1 \le a \le 3$ is the solution length, $1 \le b \le a$ is the \emph{bridge source} and $a + 1 \le c \le 2a$ is the \emph{bridge target}, with indices increasing with the distance from the gem.  For example, the episode in Figure \ref{figure:multi_lock_first} has puzzle type $(3,2,5)$. Throughout this section \emph{simplicial agent} means simplicial agent A of Figure \ref{figure:winrate_graph_v8}.  

\subsection{Attention}

We provide a preliminary analysis of the attention of the trained simplicial agent, with an aim to answer the following questions: which standard entities attend to which other standard entities? What do the virtual entities attend to? Is the $2$-simplicial attention being used? Our answers are anecdotal, based on examining visualisations of rollouts; we leave a more systematic investigation of the agent's strategy to future work.

The analysis of the agent's attention is complicated by the fact that our $2 \times 2$ convolutional layers (of which there are two) are not padded, so the number of entities processed by the Transformer blocks is $(R-2)(C-1)$ where the original game board is $R \times C$ and there is an extra column for the inventory (here $R$ is the number of rows). This means there is not a one-to-one correspondence between game board tiles and entities; for example, all the experiments reported in Figure \ref{figure:winrate_graph} are on a $7\times9$ board, so that there are $N = 40$ Transformer entities which can be arranged on a $5\times 8$ grid (information about this grid is passed to the Transformer blocks via the positional encoding). Nonetheless we found that for trained agents there is a strong relation between a tile in position $(y,x)$ and the Transformer entity with index $x + (C-1)(y-1) - 1$ for $(y,x) \in [1,R-2]\times[1,C-1] \subseteq [0,R-1] \times [0,C]$. This correspondence is presumed in the following analysis, and in our visualisations.

Across our four trained simplicial agents, the roles of the virtual entities and heads vary: the following comments are all in the context of the best simplicial agent (simplicial agent A of Figure \ref{figure:winrate_graph_v8}) but we observe similar patterns in the other trials.

\subsubsection{$1$-simplicial attention of standard entities}

\begin{figure}
\begin{center}
\includegraphics[scale=0.5]{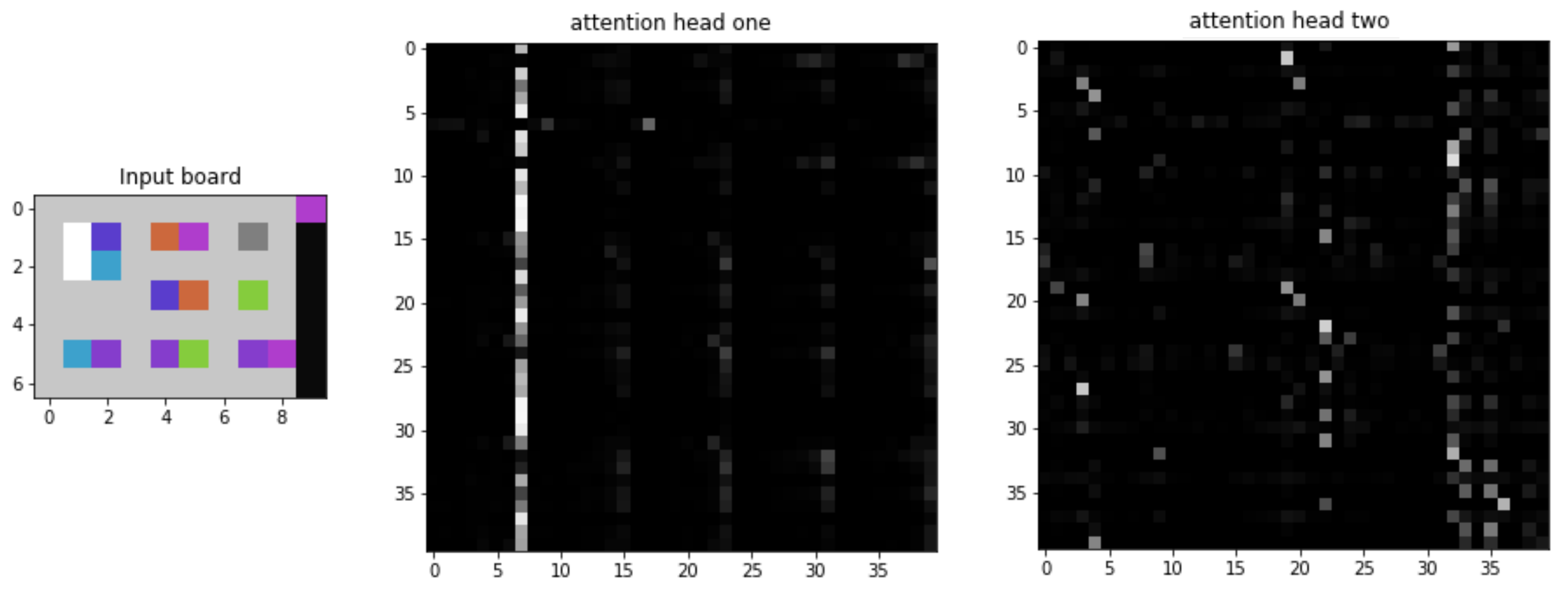}
\end{center}
\caption{Visualisation of $1$-simplicial attention in first Transformer block, between standard entities in heads one and two. The vertical axes on the second and third images are the query index $0 \le i \le 39$, the horizontal axes are the key index $0 \le j \le 39$.}
\label{figure:335A_std}
\end{figure}

The standard entities are now indexed by $0 \le i \le 39$ and virtual entities by $i = 40,41$. In the first iteration of the $2$-simplicial Transformer block, the first $1$-simplicial head appears to propagate information about the inventory. At the beginning of an episode the attention of each standard entity is distributed between entities $7,15,23,31$ (the entities in the rightmost column), it concentrates sharply on $7$ (the entity closest to the first inventory slot) after the acquisition of the first loose key, and sharply on $7, 15$ after the acquisition of the second loose key. The second $1$-simplicial head seems to acquire the meaning described in \cite{zambaldi}, where tiles of the same colour attend to one another. A typical example is shown in Figure \ref{figure:335A_std}. The video of this episode is available online \cite{repo}.

\subsubsection{$2$-simplicial attention}

The standard entities are updated using $2$-simplices in the first iteration of the $2$-simplicial Transformer block, but this is not interesting as initially the virtual entities are learned embedding vectors, containing no information about the current episode. So we restrict our analysis to the $2$-simplicial attention in the \emph{second} iteration of the Transformer block. In brief, we observe that the agent has learned to use the $2$-simplicial attention to direct tensor products of value vectors to specific query entities. In a typical timestep most query entities $i$ attend strongly to a common pair $(j,k)$ (which is $(2,2)$ in Figures \ref{figure:325Cstep18} and \ref{figure:335Estep29} and $(2,1)$ in Figure \ref{figure:335Astep13}), and we refer to this attention as \emph{generic}. The top and bottom locks on the Gem, the player, and the entities $7,15$ associated to the inventory are often observed to have a non-generic $2$-simplicial attention, and some of the relevant $2$-simplices are drawn in the aforementioned figures.

\begin{figure}
\begin{center}
\includegraphics[scale=0.5]{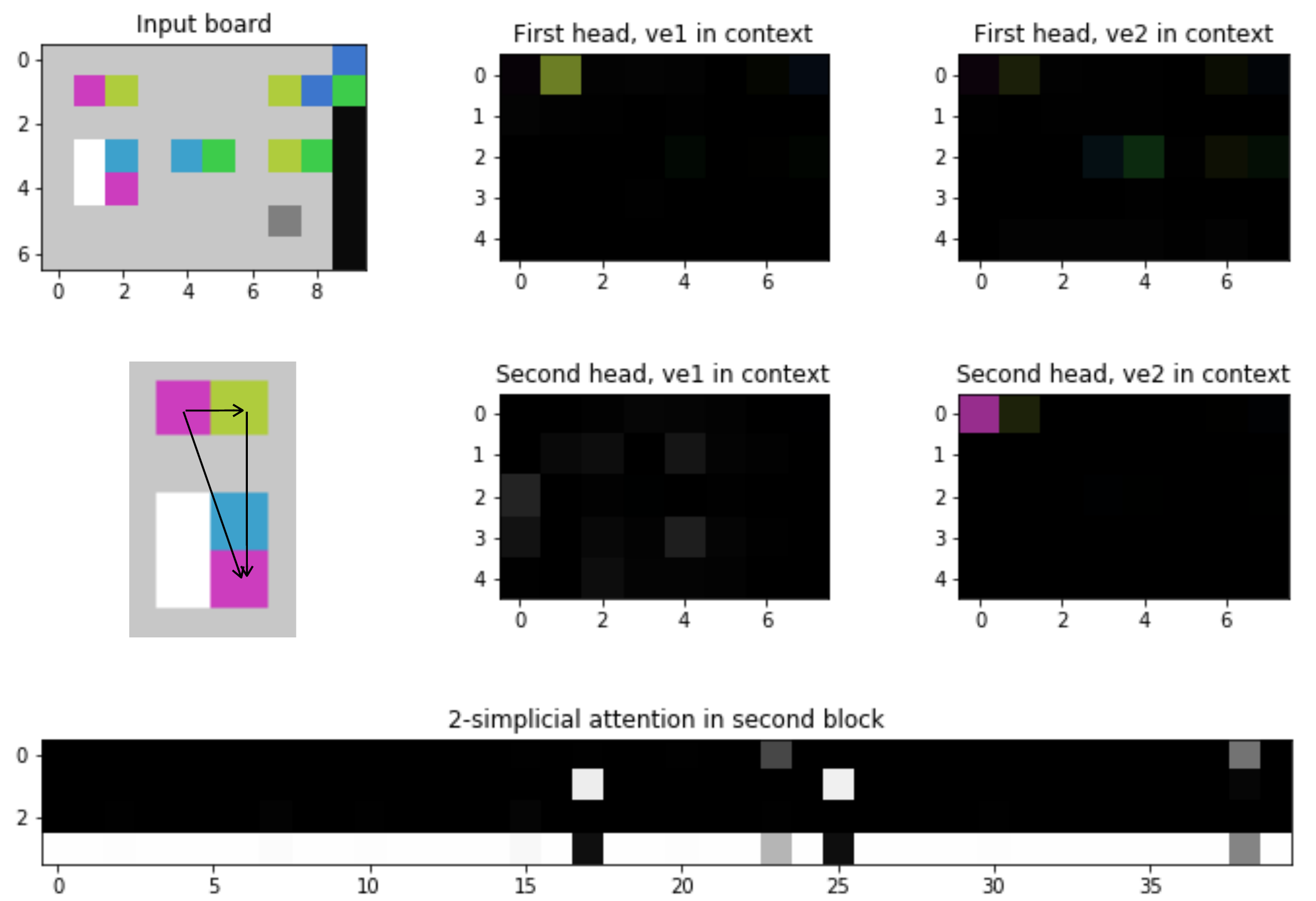}
\end{center}
\caption{The $1$-simplicial attention of the virtual entities in the first iteration (first and second row, second and third column) and $2$-simplicial attention in the second iteration, in step $18$ of an episode of puzzle type $(3,2,5)$. Entity $17$ is the top lock on the Gem, $25$ is the bottom lock on the Gem, $39$ is the player. Shown is a $2$-simplex with query entity $25$. In the visualisations of the $1$-simplicial attention, the rows are query entities $i$ and the columns are key entities $j$. In the visualisation of the $2$-simplicial attention, the columns are query entities $i$ and rows are key entity pairs $(j,k)$ in lexicographic order $(1,1),(1,2),(2,1),(2,2)$.}
\label{figure:325Cstep18}
\end{figure}

\begin{figure}
\begin{center}
\includegraphics[scale=0.5]{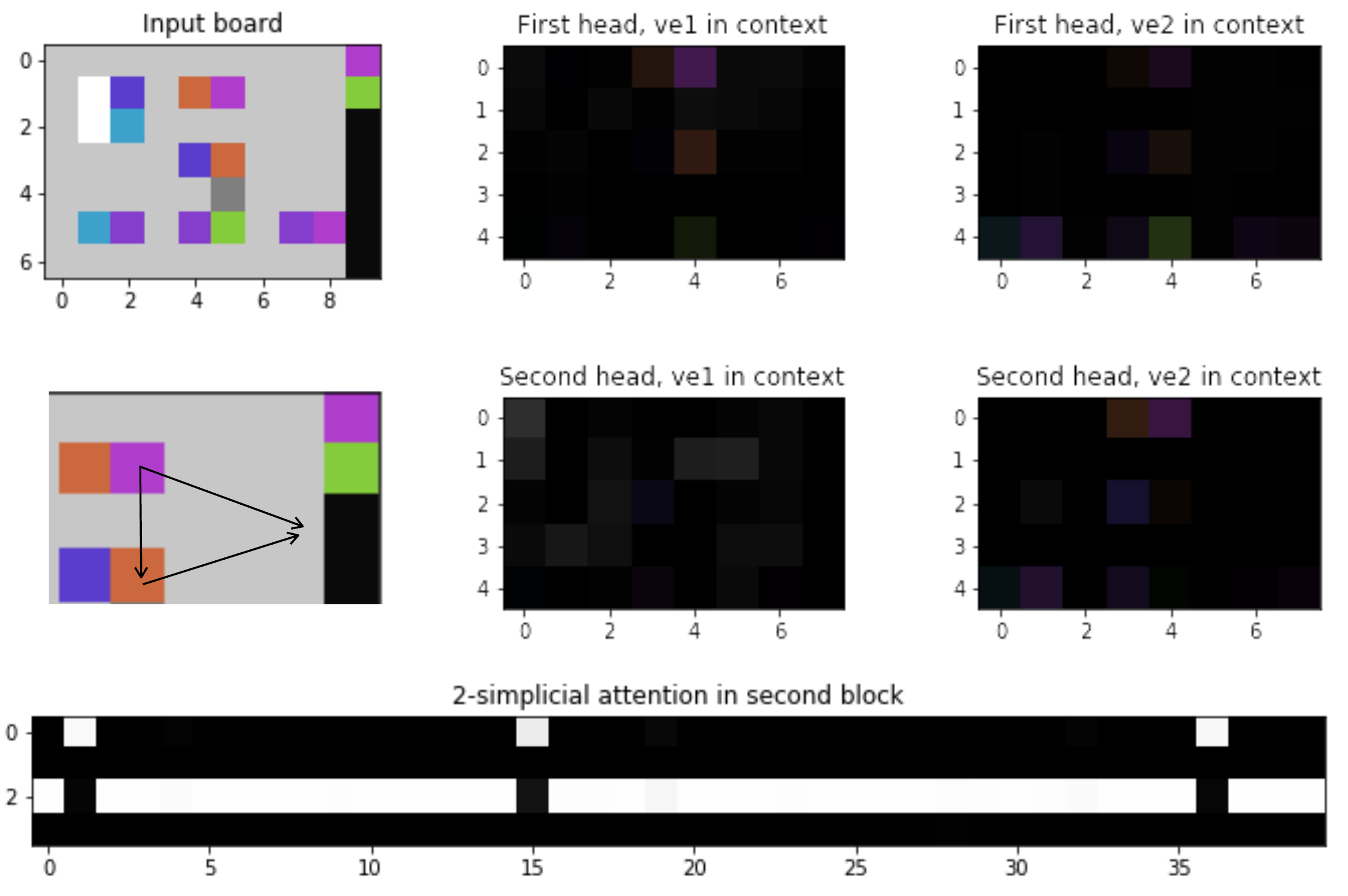}
\end{center}
\caption{Visualisation of the $2$-simplicial attention in the second Transformer block in step $13$ of an episode of puzzle type $(3,3,5)$. Entity $1$ is the top lock on the Gem, $15$ is associated with the inventory, $36$ is the lock directly below the player. Shown is a $2$-simplex with target $15$.}
\label{figure:335Astep13}
\end{figure}

\begin{figure}
\begin{center}
\includegraphics[scale=0.5]{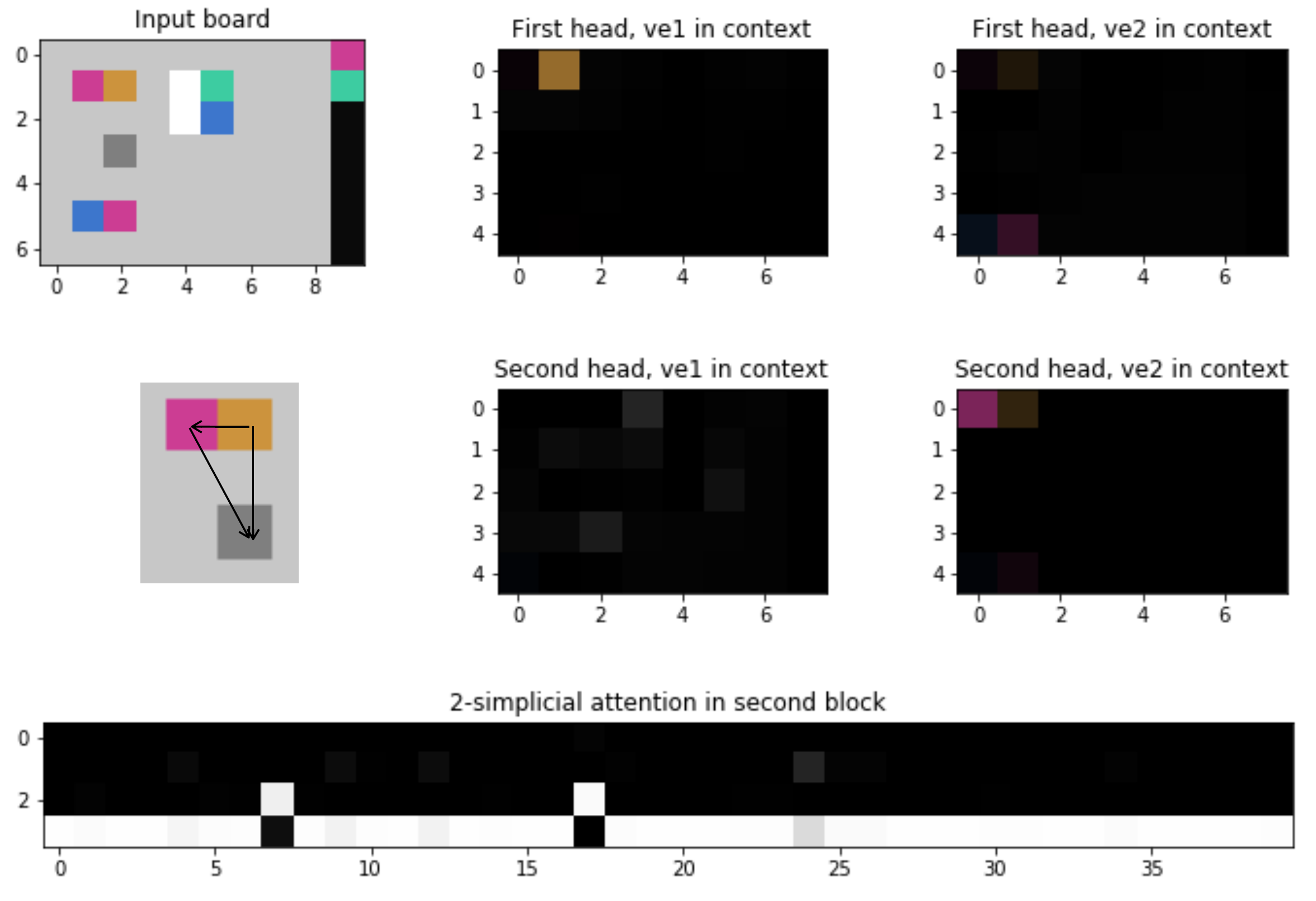}
\end{center}
\caption{Visualisation of the $2$-simplicial attention in the second Transformer block in step $29$ of an episode of puzzle type $(3,3,5)$. Entity $7$ is associated with the inventory, $17$ is the player. Shown is a $2$-simplex with target $17$.}
\label{figure:335Estep29}
\end{figure}

To give more details we must first examine the content of the virtual entities after the first iteration, which is a function of the $1$-simplicial attention of the virtual entities in the first iteration. In Figures \ref{figure:325Cstep18}, \ref{figure:335Astep13}, \ref{figure:335Estep29} we show these attention distributions multiplied by the pixels in the region $[1,R-2] \times [1, C-1]$ of the original board, in the second and third columns of the second and third rows.\footnote{For visibility in print the $1$-simplicial attention of the virtual entities in these figures has been sharpened, by multiplying the logits by $2$. The $2$-simplicial attention and $1$-simplicial attention of standard entities have not been sharpened. In this connection, we remark that in Figure \ref{figure:325Cstep18} there is one entity whose unsharpened attention coefficient for the first virtual entity in the first head is more than one standard deviation above the mean, and there are two such entities for the second virtual entity and second head.} Let $f_1 = e_{40}$ and $f_2 = e_{41}$ denote the initial representations of the first and second virtual entities, before the first iteration. We use the index $z \in \{1,2\}$ to stand for a virtual entity. In the first iteration the representations are updated by \eqref{eq:1_simp_virtual_update} to
\be
f'_z = \operatorname{LayerNorm}\Big( g_\theta\Big[ \Big\{ \sum_{\alpha} a^z_\alpha v_\alpha \Big\} \oplus \Big\{\sum_\alpha b^z_\alpha v_\alpha \Big\}\Big] + f_z \Big)
\ee
where the sum is over all entities $\alpha$, the $a^z_\alpha$ are the attention coefficients of the first $1$-simplicial head and the coefficients $b^z_\alpha$ are the attention of the second $1$-simplicial head. Writing $\bold{0}_1,\bold{0}_2$ for the zero vector in $H^1_1,H^1_2$ respectively, this can be written as
\be\label{eq:update_multhead2}
f'_z = \operatorname{LayerNorm}\Big( g_\theta\Big[ \sum_{\alpha} a^z_\alpha (v_\alpha \oplus \bold{0}_2) + \sum_\alpha b^z_\alpha (\bold{0}_1 \oplus v_\alpha) \Big] + f_z \Big)\,.
\ee
For a query entity $i$ the vector propagated by the $2$-simplicial part of the second iteration has the following terms, where $\widetilde{B} = B \circ (W^U \otimes W^U)$
\be
A^{i}_{1,1} \widetilde{B}( f'_1 \otimes f'_1 ) + A^i_{1,2} \widetilde{B}( f'_1 \otimes f'_2 ) + A^i_{2,1} \widetilde{B}(f'_2 \otimes f'_1) + A^i_{2,2} \widetilde{B}(f'_2 \otimes f'_2)\,. \label{eq:two_attention_step1}
\ee
Here $A^{i}_{j,k}$ is the $2$-simplicial attention with logits $\langle p_i, l^1_j, l^2_k \rangle$ associated to $(i,j,k)$. 

The tuple $(A^i_{1,1}, A^i_{1,2}, A^i_{2,1}, A^i_{2,2})$ is the $i$th column in our visualisations of the $2$-simplicial attention, so in the situation of Figure \ref{figure:325Cstep18} with $i = 25$ we have $A^{25}_{1,2} \approx 1$ and hence the output of the $2$-simplicial head used to update the entity representation of the bottom lock on the Gem is approximately $\widetilde{B}( f'_1 \otimes f'_2 )$. If we ignore the layer normalisation, feedforward network and skip connection in \eqref{eq:update_multhead2} then $f'_1 \approx v_{1} \oplus \bold{0}_2$ and $f'_2 \approx \bold{0}_1 \oplus v_0$ so that the output of the $2$-simplicial head with target $i = 25$ is approximately

\be\label{eq:final_product_example}
\widetilde{B}( (v_{1} \oplus \bold{0}_2) \otimes (\bold{0}_1 \oplus v_0) )\,.
\ee
Following Boole \cite{boole} and Girard \cite{girard_llogic} it is natural to read the ``product'' \eqref{eq:final_product_example} as a conjunction (consider together the entity $1$ and the entity $0$) and the sum in \eqref{eq:two_attention_step1} as a disjunction. An additional layer normalisation is applied to this vector, and the result is concatenated with the incoming information for entity $25$ from the $1$-simplicial attention, before all of this is passed through \eqref{eq:efromv} to form $e'_{25}$. 

Given that the output of the $2$-simplicial head is the only nontrivial difference between the simplicial and relational agent (with a transformer depth of two, the first $2$-simplicial Transformer block only updates the standard entities with information from embedding vectors) the performance differences reported in Figure \ref{figure:winrate_graph} suggest that this output is informative about avoiding bridges.

\subsection{The plateau}

In the training curves of the agents of Figure \ref{figure:winrate_graph_v2} and Figure \ref{figure:winrate_graph_v8} we observe a common plateau at a win rate of $0.85$. In Figure \ref{figure:bridegraphs_112} we show the per-puzzle win rate of simplicial agent A and relational agent A, on $(1,1,2)$ puzzles. These graphs make clear that the transition of both agents to the plateau at $0.85$ is explained by solving the $(1,1,2)$ type (and to a lesser degree by progress on all puzzle types with $b = 1$). In Figure \ref{figure:bridegraphs_112} and Figure \ref{figure:bridegraph_334335336} we give the per-puzzle win rates for a small sample of other puzzle types. Shown are the mean and standard deviation of 100 runs across various checkpoints of simplicial agent A and relational agent A.
\begin{figure}
\begin{center}
\includegraphics[scale=0.45]{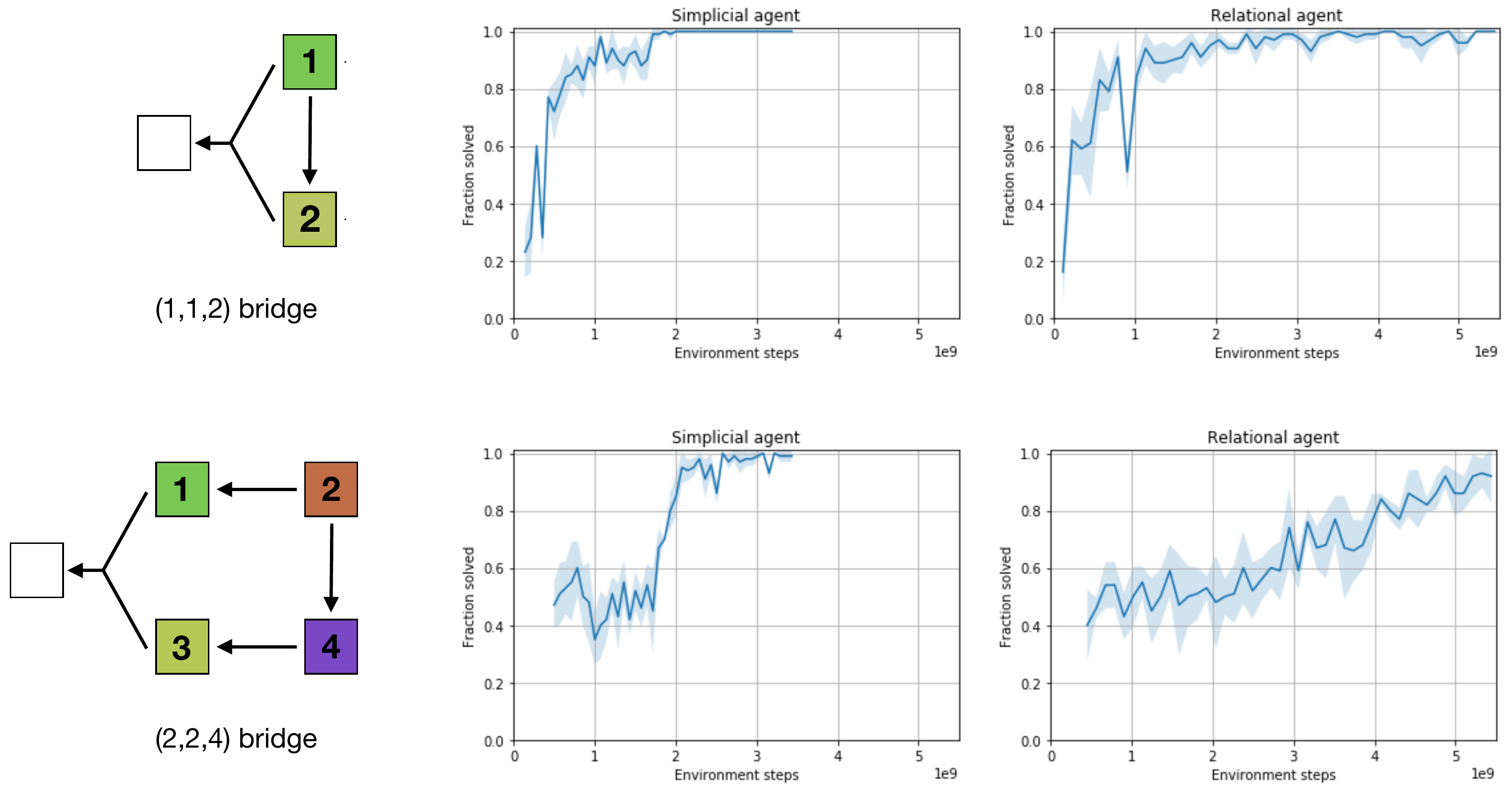}
\end{center}
\caption{Simplicial and relational agent win rate on puzzle types $(1,1,2),(2,2,4)$.}
\label{figure:bridegraphs_112}
\end{figure}

\begin{figure}
\begin{center}
\includegraphics[scale=0.45]{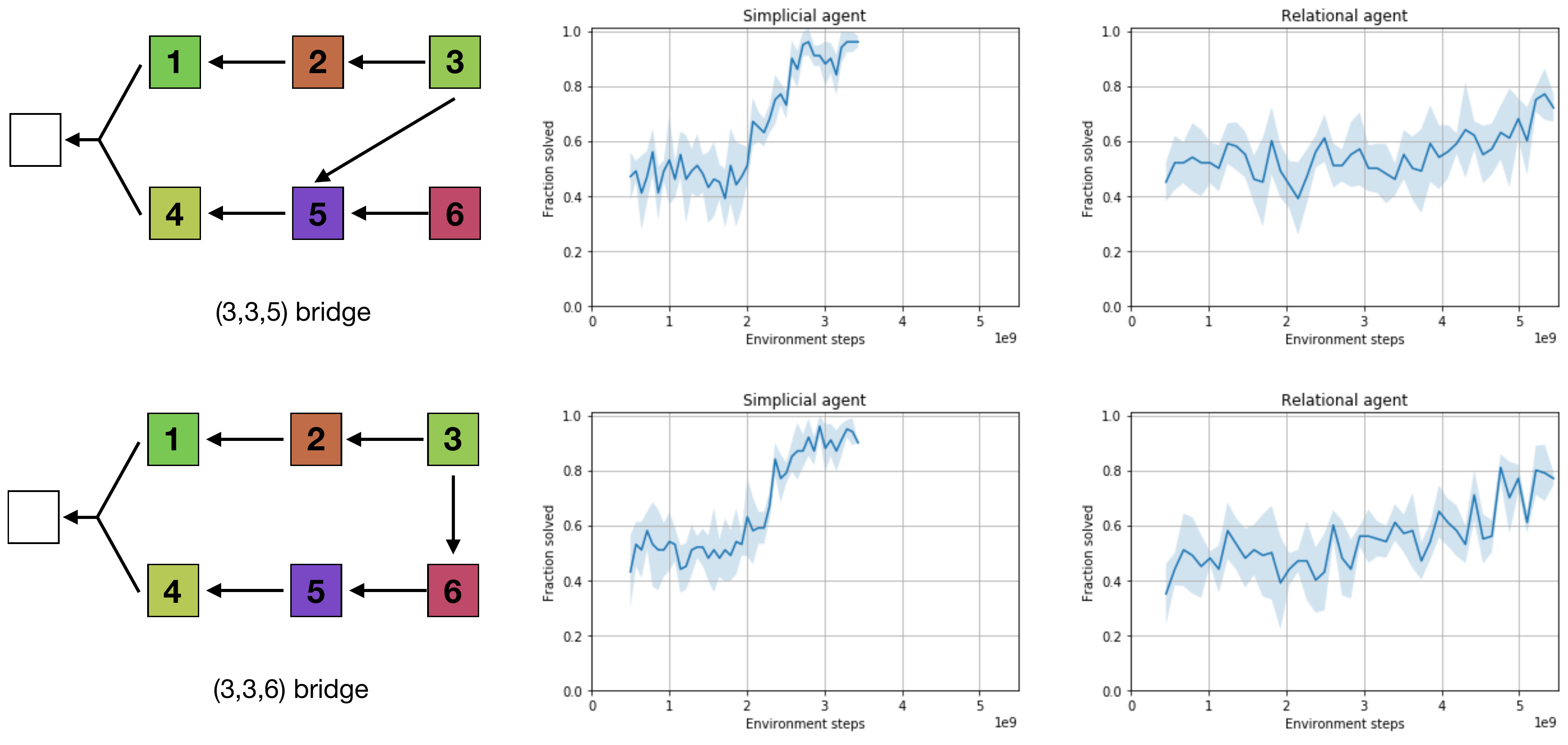}
\end{center}
\caption{Simplicial and relational agent win rate on puzzle types $(3,3,5),(3,3,6)$.}
\label{figure:bridegraph_334335336}
\end{figure}

\section{Discussion}

Motivated by the idea that abstract reasoning in humans is grounded in structural representations that are adapted from those evolved for spatial reasoning, we have presented a simplicial inductive bias. We have shown that in the context of a deep reinforcement learning environment with nontrivial logical structure, this bias is superior to a purely relational inductive bias. In this concluding section we briefly address some of the limitations of our work, and future directions.
\\

\textbf{Limitations.} Our experiments involve only a small number of virtual entities, and a small number of iterations of the Transformer block: it is possible that for large numbers of virtual entities and iterations, our choices of layer normalisation are not optimal. Our aim was to test the viability of the simplicial Transformer starting with the minimal configuration, so we have also not tested multiple heads of $2$-simplicial attention. 

Deep reinforcement learning is notorious for poor reproducibility \cite{henderson}, and in an attempt to follow the emerging best practices we are releasing our agent and environment code, trained agent weights, and training notebooks \cite{repo}.
\\

\textbf{Future directions.} It is clear using the general formulas for the unsigned scalar product how to define an $n$-simplicial Transformer block, and this is arguably an idiomatic expression in the context of deep learning of the linear logic semantics of the $\otimes$ connective. It would be interesting to extend this to include other connectives, in environments encoding a larger fragment of linear logic proofs. However, at present this seems out of reach because the $O(N^2)$ complexity makes scaling to much larger environments impractical. We hope that some of the scaling work being done in the Transformer literature can be adapted to the simplicial Transformer; see for example \cite{sparse}.

\appendix

\section{Clifford algebra}\label{section:clifford}

The volume of an $n$-simplex in $\mathbb{R}^n$ with vertices at $0,v_1,\ldots,v_n$ is
\[
\operatorname{Vol}_n = \Big| \frac{1}{n!} \det(v_1, \ldots, v_n) \Big|
\]
which is $\tfrac{1}{n!}$ times the volume of the $n$-dimensional parallelotope which shares $n$ edges with the $n$-simplex. In our applications the space of representations $H$ is high dimensional, but we wish to speak of the volume of $k$-simplices for $k < \dim(H)$ and use those volumes to define the coefficients of our simplicial attention. The theory of Clifford algebras \cite{hestenes} is one appropriate framework for such calculations.

Let $H$ be an inner product space with pairing $(v,w) \mapsto v \cdot w$. The Clifford algebra $\Cl(H)$ is the associative unital $\mathbb{R}$-algebra generated by the vectors $v \in H$ with relations
\[
vw + wv = 2 (v \cdot w) \cdot 1\,.
\]
The canonical $k$-linear map $H \lto \Cl(H)$ is injective, and since $v^2 = \Vert v \Vert^2 \cdot 1$ in $\Cl(H)$, any nonzero vector $v \in H$ is a unit in the Clifford algebra. While as an algebra $\Cl(H)$ is only $\mathbb{Z}_2$-graded, there is nonetheless a $\mathbb{Z}$-grading of the underlying vector space which can be defined as follows: let $\{ e_i \}_{i=1}^n$ be an orthonormal basis of $H$, then the set
\[
\cat{B} = \big\{ e_{i_1} \cdots e_{i_m} \big\}_{i_1 < \cdots < i_m}
\]
is a basis for $\Cl(H)$, with $m$ ranging over the set $\{0,\ldots,n\}$. If we assign the basis element $e_{i_1} \cdots e_{i_m}$ the degree $m$ then this determines a $\mathbb{Z}$-grading $[-]_k$ of the Clifford algebra, which is easily checked to be independent of the choice of basis.

\begin{definition} $[ A ]_k$ denotes the homogeneous component of $A \in \Cl(H)$ of degree $k$. 
\end{definition}

\begin{example} Given $a,b,c \in H$ we have $[ab]_0 = a \cdot b, [ab]_2 = a \wedge b$ and
\be
[abc]_1 = (a \cdot b) c - (a \cdot c)b + (b \cdot c)a\,,\qquad [abc]_3 = a \wedge b \wedge c \,.
\ee
\end{example}

There is an operation on elements of the Clifford algebra called \emph{reversion} in geometric algebra \cite[p.45]{hestenes} which arises as follows: the opposite algebra $\Cl(H)^{\textrm{op}}$ admits a $k$-linear map $j: H \lto \Cl(H)^{\textrm{op}}$ with $j(v) = v$ which satisfies $j(v) j(w) + j(w) j(v) = 2 (v \cdot w) \cdot 1$, and so by the universal property there is a unique morphism of algebras
\[
(-)^\dagger: \Cl(H) \lto \Cl(H)^{\textrm{op}}
\]
which restricts to the identity on $H$. Note $(v_1 \cdots v_k)^\dagger = v_k^\dagger \cdots v_1^\dagger$ for $v_1,\ldots,v_k \in H$ and $(-)^\dagger$ is homogeneous of degree zero with respect to the $\mathbb{Z}$-grading. Using this operation we can define the magnitude \cite[p.46]{hestenes} of any element of the Clifford algebra.

\begin{definition} The \emph{magnitude} of $A \in \Cl(H)$ is  $|A| = \sqrt{[ A^\dagger A ]_0}$.
\end{definition}

For vectors $v_1,\ldots,v_k \in H$,
\be\label{eq:magnitudeproduct}
| v_1 \cdots v_k |^2 = [ v_k \cdots v_1 v_1 \cdots v_k ]_0 = \| v_1 \|^2 \cdots \| v_k \|^2
\ee
and in particular for $v \in H$ we have $|v| = \| v \|$.

\begin{lemma}\label{lemma:norm_decomp} Set $n = \dim(H)$. Then for $A \in \Cl(H)$ we have
\[
|A|^2 = \sum_{i=0}^n |[ A ]_i|^2\,.
\]
\end{lemma}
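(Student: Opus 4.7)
The plan is to exhibit an orthonormal basis of $\Cl(H)$ with respect to the symmetric bilinear form $\langle A, B \rangle := [A^\dagger B]_0$ that refines the $\mathbb{Z}$-grading, and then read off the claim as a Pythagorean identity.

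First, I would fix the orthonormal basis $\{e_i\}_{i=1}^n$ of $H$ used to define the grading, and consider the corresponding basis $\cat{B} = \{ E_I := e_{i_1}\cdots e_{i_m} : I = \{i_1 < \cdots < i_m\} \}$ of $\Cl(H)$. The key computational step is to verify that $[E_I^\dagger E_J]_0 = \delta_{IJ}$. For $I = J$, note that $E_I^\dagger E_I = e_{i_m}\cdots e_{i_1}e_{i_1}\cdots e_{i_m}$, and since each $e_{i_r}^2 = \|e_{i_r}\|^2 \cdot 1 = 1$ we may collapse inward pair-by-pair to obtain $1$; this is just the special case of \eqref{eq:magnitudeproduct}. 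For $I \neq J$, one repeatedly applies the anticommutation relation $e_p e_q = -e_q e_p$ for $p \neq q$ and the squaring relation $e_p^2 = 1$ to $E_I^\dagger E_J$; every index appearing in both $I$ and $J$ gets eliminated, and what remains is $\pm E_{I \triangle J}$ where $I \triangle J$ is the symmetric difference. Since $I \neq J$ the set $I \triangle J$ is nonempty, so this monomial is homogeneous of positive degree, and its degree-zero part vanishes.

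Given this, any $A \in \Cl(H)$ expands uniquely as $A = \sum_I c_I E_I$ with $c_I \in \mathbb{R}$, and the homogeneous components are $[A]_k = \sum_{|I| = k} c_I E_I$. Then by bilinearity and the orthonormality just established,
\[
|A|^2 = [A^\dagger A]_0 = \sum_{I,J} c_I c_J [E_I^\dagger E_J]_0 = \sum_I c_I^2\,.
\]
Applying the same identity to $[A]_k$ gives $|[A]_k|^2 = \sum_{|I| = k} c_I^2$, so summing over $k$ and partitioning the index set of $\cat{B}$ by cardinality yields $\sum_{k=0}^n |[A]_k|^2 = \sum_I c_I^2 = |A|^2$.

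The only step that requires genuine care is the verification that $[E_I^\dagger E_J]_0 = \delta_{IJ}$; everything else is bookkeeping. Once that is in hand, the lemma reduces to the observation that $\cat{B}$ is an orthonormal basis adapted to the grading. It would also be worth remarking, as a byproduct, that this computation shows $\langle -,- \rangle$ is positive definite, so that $|A|$ is genuinely a norm and the decomposition $\Cl(H) = \bigoplus_k [\Cl(H)]_k$ is orthogonal, which retroactively confirms that the grading is independent of the choice of orthonormal basis of $H$.
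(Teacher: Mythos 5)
Your proof is correct, and it is essentially the standard argument: the paper itself gives no proof, deferring to Hestenes (1.33), and what you have written is exactly the computation that reference relies on, namely that the monomial basis $\cat{B}$ is orthonormal for the form $(A,B) \mapsto [A^\dagger B]_0$ and is adapted to the grading, so the identity is Pythagoras. The key step $[E_I^\dagger E_J]_0 = \delta_{IJ}$ is verified correctly via the relations $e_p e_q = -e_q e_p$ ($p \neq q$) and $e_p^2 = 1$. One caveat on your closing remark: orthogonality of the graded pieces with respect to this form does not by itself ``retroactively confirm'' that the grading is independent of the chosen orthonormal basis of $H$ (many orthogonal decompositions of $\Cl(H)$ exist); basis-independence requires a separate argument, e.g.\ identifying $[\,\cdot\,]_k$ with the projection onto $\bigwedge^k H$ as in Remark \ref{remark:calculation_top}. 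That side remark is not needed for the lemma, whose proof stands as written.
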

\begin{proof}
See \cite[(1.33)]{hestenes}.
\end{proof}

\begin{example} For $a, b, c \in H$ the lemma gives
\[
\| a \|^2 \| b \|^2 \| c\|^2 = | [abc]_1 |^2 + | [abc]_3 |^2 = \| (a \cdot b) c - (a \cdot c)b + (b \cdot c)a \|^2 + | a \wedge b \wedge c |^2
\]
and hence
\[
\| (a \cdot b) c - (a \cdot c)b + (b \cdot c)a \|^2 = \| a \|^2 \| b \|^2 \| c\|^2 - | a \wedge b \wedge c |^2\,.
\]
\end{example}

\begin{remark}\label{remark:calculation_top}
Given vectors $v_1,\ldots,v_k \in H$ the wedge product $v_1 \wedge \cdots \wedge v_k$ is an element in the exterior algebra $\bigwedge H$. Using the chosen orthogonal basis $\cat{B}$ we can identify the underlying vector space of $\Cl(H)$ with $\bigwedge H$ and using this identification (set $v_i = \sum_{j} \lambda_{ij} e_j$)
\begin{align*}
[ v_1 \cdots v_k ]_k &= \Big[ \big( \sum_{j_1=1}^n \lambda_{1 j_1} e_{j_1} \big) \cdots \big( \sum_{j_k=1}^n \lambda_{k j_k} e_{j_k} \big) \Big]_k\\
&= \sum_{j_1,\ldots,j_k} \lambda_{1 j_1} \cdots \lambda_{k j_k} [ e_{j_1} \cdots e_{j_k} ]_k\\
&= \sum_{\substack{1 \le j_1 < \cdots < j_k \le n \\ \sigma \in S_k}} \lambda_{1 j_{\sigma(1)}} \cdots \lambda_{k j_{\sigma(k)}} [ e_{j_{\sigma(1)}} \cdots e_{j_{\sigma(k)}} ]_k\\
&= \sum_{\substack{1 \le j_1 < \cdots < j_k \le n \\ \sigma \in S_k}} (-1)^{|\sigma|} \lambda_{1 j_{\sigma(1)}} \cdots \lambda_{k j_{\sigma(k)}} e_{j_1} \cdots e_{j_k}\\
&= v_1 \wedge \cdots \wedge v_k
\end{align*}
where $S_k$ is the permutation group on $k$ letters. That is, the top degree piece of $v_1 \cdots v_k$ in $\Cl(H)$ is always the wedge product. It is then easy to check that the squared magnitude of this wedge product is
\be\label{eq:sum_of_squares_det}
| [v_1 \cdots v_k]_k |^2 = \sum_{1 \le j_1 < \cdots < j_k \le n} \Big( \sum_{\sigma \in S_k} \lambda_{1 j_{\sigma(1)}} \cdots \lambda_{k j_{\sigma(k)}} \Big)^2\,.
\ee
The term in the innermost bracket is the determinant of the $k \times k$ submatrix with columns $\bold{j} = (j_1,\ldots,j_k)$ and in the special case where $k = n = \dim(H)$ we see that the squared magnitude is just the square of the determinant of the matrix $( \lambda_{ij} )_{1 \le i, j \le n}$.
\end{remark}

The wedge product of $k$-vectors in $H$ can be thought of as an oriented $k$-simplex, and the magnitude of this wedge product in the Clifford algebra computes the volume.

\begin{definition} The \emph{volume} of a $k$-simplex in $H$ with vertices $0,v_1,\ldots,v_k$ is
\be
\operatorname{Vol}_k = \frac{1}{k!} \big| [ v_1 \cdots v_k ]_k \big|\,.
\ee
\end{definition}

\begin{definition}\label{defn:unsignedscalarprod} Given $v_1,\ldots,v_k \in H$ the \emph{$k$-fold unsigned scalar product} is
\be\label{eq:defn_unsigned_scalar}
\langle v_1, \ldots, v_k \rangle = \sqrt{ \sum_{i=0}^{k-1} \big| [ v_1 \cdots v_k ]_i \big|^2 }\,.
\ee
By Lemma \ref{lemma:norm_decomp} and \eqref{eq:magnitudeproduct} we have
\be\label{eq:useful_identity}
\langle v_1, \ldots, v_k \rangle^2 = \| v_1 \|^2 \cdots \| v_k \|^2 - (k!)^2 \operatorname{Vol}_k^2
\ee
which gives the desired generalisation of \eqref{area_picture}, \eqref{area_volume}.
\end{definition}

\begin{example} For $k = 2$ the unsigned scalar product is the absolute value of the dot product, $\langle a, b \rangle = |a \cdot b|$. For $k = 3$ we obtain the formulas of Definition \ref{definition_tripleprod}, from which it is easy to check that
\be\label{eq:triple_has_norms}
\langle a, b, c \rangle = \| a\| \|b\| \|c\| \sqrt{ \cos^2 \theta_{ab} + \cos^2 \theta_{bc} + \cos^2 \theta_{ac} - 2 \cos\theta_{ab} \cos\theta_{ac} \cos\theta_{bc} }
\ee
where $\theta_{ac},\theta_{ab},\theta_{ac}$ are the angles between $a,b,c$. The geometry of the three-dimensional case is more familiar: if $\dim(H) = 3$ then $| [a b c]_3 |$ is the absolute value of the determinant by \eqref{eq:sum_of_squares_det}, so that $\operatorname{Vol}_3 = \frac{1}{6}| a \cdot (b \times c) |$ is the usual formula for the volume of the $3$-simplex. Recall that $| a \cdot (b \times c) | = \| a \| \| b \| \| c \| | \sin(\theta_{bc})| | \cos(\phi) |$ where $\phi$ is the angle between the cross product $a$ and $b \times c$. Hence, in this case the scalar triple product is
\be\label{eq:angles_2simp_attention}
\langle a,b,c \rangle = \| a \| \| b \| \| c \| \sqrt{ 1 - \sin^2(\theta_{ab}) \cos^2(\phi) }\,.
\ee
\end{example}

With these formulas in mind the geometric content of the following lemma is clear:

\begin{lemma}\label{lemma:proof_atte_prop} Let $v_1,\ldots,v_k \in H$. Then
\begin{itemize}
\item[(i)] $0 \le \langle v_1, \ldots, v_k \rangle \le \| v_1 \| \cdots \| v_k \|$.
\item[(ii)] If the $v_i$ are all pairwise orthogonal then $\langle v_1, \ldots, v_k \rangle = 0$.
\item[(iii)] The set $\{ v_1, \ldots, v_k \}$ is linearly dependent if and only if $\langle v_1, \ldots, v_k \rangle = \| v_1 \| \cdots \| v_k \|$.
\item[(iv)] For any $\sigma \in S_k$ we have $\langle v_1, \ldots, v_k \rangle = \langle v_{\sigma(1)}, \cdots, v_{\sigma(k)} \rangle$.
\item[(v)] For $\lambda_1,\ldots,\lambda_k \in \mathbb{R}$, we have
\[
\langle \lambda_1 v_1, \ldots, \lambda_k v_k \rangle = | \lambda_1 | \cdots | \lambda_k | \langle v_1, \ldots, v_k \rangle\,.
\]
\end{itemize}
\end{lemma}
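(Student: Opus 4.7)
The plan is to deduce all five parts from the single identity \eqref{eq:useful_identity}:
\[
\langle v_1, \ldots, v_k \rangle^2 = \| v_1 \|^2 \cdots \| v_k \|^2 - (k!)^2 \operatorname{Vol}_k^2\,,
\]
which was established in Definition \ref{defn:unsignedscalarprod} via Lemma \ref{lemma:norm_decomp} together with \eqref{eq:magnitudeproduct}. Every claim except (iv) is then an elementary algebraic consequence of this identity combined with standard facts about the volume of a $k$-simplex spanned by vectors in $H$, namely that $\operatorname{Vol}_k \ge 0$, that $\operatorname{Vol}_k$ attains the maximum value $\tfrac{1}{k!}\| v_1 \| \cdots \| v_k \|$ precisely when the $v_i$ are pairwise orthogonal, that $\operatorname{Vol}_k = 0$ if and only if $\{v_1,\ldots,v_k\}$ is linearly dependent, and that $\operatorname{Vol}_k$ is multilinearly scaled by $|\lambda_1|\cdots|\lambda_k|$ under $v_i \mapsto \lambda_i v_i$.

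Concretely, (i) follows because $(k!)^2 \operatorname{Vol}_k^2 \ge 0$ gives the upper bound, while non-negativity is built into the square root. For (ii), pairwise orthogonality means the parallelotope spanned by $v_1,\ldots,v_k$ is a rectangular box of volume $\| v_1 \| \cdots \| v_k \|$, so $\operatorname{Vol}_k = \tfrac{1}{k!}\|v_1\|\cdots\|v_k\|$, and substitution into \eqref{eq:useful_identity} yields zero. (Alternatively, pairwise orthogonality in $\Cl(H)$ forces $v_iv_j = -v_jv_i$ so that $v_1\cdots v_k$ is homogeneous of degree $k$, killing every summand in \eqref{eq:defn_unsigned_scalar}.) Part (iii) is immediate from \eqref{eq:useful_identity}: the right side equals $\|v_1\|^2\cdots\|v_k\|^2$ exactly when $\operatorname{Vol}_k = 0$, and the latter is the classical criterion for linear dependence. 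For (v), simply observe that $\| \lambda_i v_i\| = |\lambda_i|\|v_i\|$ and $\operatorname{Vol}_k(\lambda_1 v_1,\ldots,\lambda_k v_k) = |\lambda_1|\cdots|\lambda_k|\operatorname{Vol}_k(v_1,\ldots,v_k)$ (from multilinearity of the wedge product in Remark \ref{remark:calculation_top}), whence both terms on the right of \eqref{eq:useful_identity} acquire a common factor of $\lambda_1^2 \cdots \lambda_k^2$; taking square roots gives the result.

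The only item not visibly symmetric from the Clifford-algebra side is (iv), and this will be the main (though still mild) obstacle since individual homogeneous components $[v_1 \cdots v_k]_i$ do depend on the ordering of the $v_j$. The cleanest route is again via \eqref{eq:useful_identity}: the norms $\|v_i\|$ are manifestly symmetric in the indices, and the volume $\operatorname{Vol}_k$ depends only on the simplex with vertex set $\{0,v_1,\ldots,v_k\}$, hence is invariant under any permutation of the labels $1,\ldots,k$. Therefore the right-hand side of \eqref{eq:useful_identity} is symmetric, and (iv) follows. (One can also argue directly in $\Cl(H)$ using Remark \ref{remark:calculation_top}: the top component $[v_{\sigma(1)} \cdots v_{\sigma(k)}]_k = (\sgn\,\sigma)\, v_1 \wedge \cdots \wedge v_k$ has magnitude independent of $\sigma$, and the total magnitude $|v_1\cdots v_k|^2 = \|v_1\|^2\cdots\|v_k\|^2$ is symmetric by \eqref{eq:magnitudeproduct}, so the sum of the lower-degree squared magnitudes is symmetric as well.)
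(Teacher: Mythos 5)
Your proof is correct and follows essentially the same route as the paper: every part is read off from the identity \eqref{eq:useful_identity} together with standard properties of the wedge product, and your parenthetical anticommutation argument for (ii) is exactly the paper's own. The only cosmetic difference is that your primary justification of (ii) is the geometric ``rectangular box'' picture rather than the Clifford-algebra computation, and in (v) you scale both terms of \eqref{eq:useful_identity} where the paper simply notes $|\lambda A| = |\lambda|\,|A|$; neither changes the substance.
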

\begin{proof}
(i) is obvious from \eqref{eq:defn_unsigned_scalar}, \eqref{eq:useful_identity}. For (ii) note that
\be
v_1 \wedge \cdots \wedge v_k = \frac{1}{k!} \sum_{\sigma \in S_k} (-1)^{|\sigma|} v_{\sigma(1)} \cdots v_{\sigma(k)}
\ee
and hence if the $v_i$ are pairwise orthogonal, and therefore anticommute in $\Cl(H)$, we have $v_1 \wedge \cdots \wedge v_k = v_1 \cdots v_k$. But the left hand side is homogeneous of degree $k$, so this means that $[v_1 \cdots v_k]_i = 0$ for $i < k$ and hence that $\langle v_1, \ldots, v_k \rangle = 0$. The property (iii) is a standard property of wedge products. Finally, (iv) is clear from \eqref{eq:useful_identity} and (v) is clear since $| \lambda A | = |\lambda| |A|$ for any $A \in \Cl(H)$.
\end{proof}

For more on simplicial methods in the context of geometric algebra see \cite{sobczyk,beware}.

\section{Logic and reinforcement learning}\label{section:logic_rl}

It is no simple matter to define \emph{logical reasoning} nor to recognise when an agent (be it an animal or a deep reinforcement learning agent) is employing such reasoning \cite{britannica2, barrett}. We therefore begin by returning to Aristotle, who viewed logic as the study of general patterns by which one could distinguish valid and invalid forms of philosophical argumentation; this study having as its purpose the production of \emph{strategies} for winning such argumentation games \cite{aristotle, aristotle-logic, britannica}. In this view, logic involves
\begin{itemize}
\item \textbf{two players} with one asserting the truth of a proposition and attempting to defend it, and the latter asserting its falsehood and attempting to refute it, and an
\item \textbf{observer} attempting to learn the general patterns which are predictive of which of the two players will win such a game given some intermediate state.
\end{itemize}
Suppose we observe over a series of games\footnote{We cannot infer that a behaviour constitutes logical reasoning if we only observe it over the course of a single game. For example, while it may appear that a human proving a statement in mathematics by correctly applying a set of deduction rules is engaged in logical reasoning, this appearance may be false, for if we were to observe one thousand attempts to prove a sample of similar propositions, and in only one attempt was the human able to correctly apply the deduction rules, we would have to retract our characterisation of the behaviour as logical reasoning. The concept is also empty if we insist that it applies only if in \emph{every} such attempt the deduction rules are correctly applied, because human mathematicians make mistakes.} that a player is following an explicit strategy which has been distilled from general patterns observed in a large distribution of games, and that by following this strategy they almost always win. A component of that explicit strategy can be thought of as logical reasoning to the degree that it consists of rules that are independent of the particulars of the game \cite[\S 11.25]{aristotle}. The problem of recognising logical reasoning in behaviour is therefore twofold: the strategy employed by a player is typically \emph{implicit}, and even if we can recognise explicit components of the strategy, in practice there is not always a clear way to decide which rules are domain-specific.

In mathematical logic the idea of argumentation games has been developed into a theory of \emph{mathematical proof as strategy} in the game semantics of linear logic \cite{hylandgame} where one player (the \emph{prover}) asserts a proposition $G$ and the other player (the \emph{refuter}) interrogates this assertion.\footnote{It is possible for the prover to win such an argument without possessing a proof (for instance if $G$ is the disjunct of propositions $A, B$ and the refuter demands a proof of $A$ in a situation where the prover knows a proof of $A$ but not of $B$) but the only strategy guaranteed to win is to play according to a proof.} Consider a reinforcement learning problem \cite{suttonbarto} in which the deterministic environment encodes $G$ together with a multiset of hypotheses $\Gamma$ which are sufficient to prove $G$. Such a pair is called a \emph{sequent} and is denoted $\Gamma \vdash G$.  The goal of the agent (in the role of prover) is to synthesise a proof of $G$ from $\Gamma$ through a series of actions. The environment (in the role of refuter) delivers a positive reward if the agent succeeds, and a negative reward if the agent's actions indicate a commitment to a line of proof which cannot possibly succeed. Consider a deep reinforcement learning agent with a policy network parametrised by a vector of weights $\bold{w} \in \mathbb{R}^D$ and a sequence of full-episode rollouts of this policy in the environment, each of which either ends with the agent constructing a proof (prover wins) or failing to construct a proof (refuter wins) with the sequent $\Gamma \vdash G$ being randomly sampled in each episode. Viewing these episodes as instances of an argumentation game, the goal of Aristotle's observer is to learn from this data to predict, given an intermediate state of some particular episode, which actions by the prover will lead to success (proof) or failure (refutation). As the reward is correlated with success and failure in this sense, the goal of the observer may be identified with the training objective of the action-value network underlying the agent's policy, and we may identify the triple \emph{player}, \emph{opponent}, \emph{observer} with the triple \emph{agent}, \emph{environment} and \emph{optimisation process}. If this process succeeds, so that the trained agent wins in almost every episode, then by definition the weights $\bold{w}$ are an \emph{implicit strategy} for proving sequents $\Gamma \vdash G$.

This leads to the question: is the deep reinforcement learning agent parametrised by $\bold{w}$ performing logical reasoning? We would have no reason to deny that logical reasoning is present if we were to find, in the weights $\bold{w}$ and dynamics of the agent's network, an isomorphic image of an explicit strategy that we recognise as logically correct. In general, however, it seems more useful to ask \emph{to what degree} the behaviour is governed by logical reasoning, and thus to what extent we can identify an approximate \emph{homomorphic} image in the weights and dynamics of a logically correct explicit strategy. Ultimately this should be automated using ``logic probes'' along the lines of recent developments in neural network probes \cite{alain, koh, nguyen, srikumar, simonyan}.

\begin{remark} In the context of bridge BoxWorld, here is a strategy that would qualify as logical reasoning by the above standard: call a pair of boxes $\alpha, \beta$ a \emph{source} if they have the same lock colour but distinct key colours, and a \emph{sink} if they have the same key colour but distinct lock colours. If $\alpha, \beta$ is a source or a sink then either $\alpha$ is the bridge or $\beta$ is the bridge. Therefore, if you observe both a source and a sink then you can locate the bridge. This strategy contains perceptual components (learning to recognise sources and sinks) and components that qualify as logical reasoning (using the combination of a source and a sink to identify the bridge).
\end{remark}

\subsection{Strategies and proof trees}

We explain the correspondence between agent behaviour in bridge BoxWorld and proofs in linear logic. For an introduction to linear logic tailored to the setting of games see \cite[Ch.2]{martens}. Recall that to each colour $c$ we have associated a proposition $C$ which can be read as ``the key of colour $c$ is obtainable''. If a box $\beta$ appears in an episode of bridge BoxWorld (this includes loose keys and the box with the Gem) then we assume given a proof $\pi^\beta$ of a sequent associated to the box by the following rules: the sequent $X_\beta \vdash Y_\beta$ associated to a loose key of colour $c$ is $\vdash C$, the sequent associated to an ordinary box with a lock of colour $c$ and containing a key of colour $c'$ is $C \vdash C'$ and the sequent associated to a multiple lock on the Gem, with key colours $c,c'$ is $C \otimes C' \vdash \mathbb{G}$. In the following we identify the box $\beta$ with its associated sequent, and write for example $\pi^{\vdash C}$ for the chosen proof associated to the loose key of colour $c$. The set of premises (or axioms) in an episode of bridge BoxWorld is the multiset $\Gamma$ of proofs $\pi^\beta$ as $\beta$ ranges over all boxes.

\begin{definition} Given a formula $A$ (thought of as representing the contents of the inventory) and a box $\beta$ we define the proof $\pi^\beta_A$ to be
\be
\begin{mathprooftree}
\AxiomC{$A \vdash A$}
\AxiomC{$\pi^\beta$}
\noLine\UnaryInfC{$\vdots$}
\def\extraVskip{5pt}
\noLine\UnaryInfC{$X_\beta \vdash Y_\beta$}
\RightLabel{\scriptsize $\otimes R$}
\BinaryInfC{$A, X_\beta \vdash A \otimes Y_\beta$}
\RightLabel{\scriptsize $\otimes L$}
\UnaryInfC{$A \otimes X_\beta \vdash A \otimes Y_\beta$}
\end{mathprooftree}
\ee
One can think of this proof as the algorithm which acts to update the contents of the inventory upon opening the box $\beta$.
\end{definition}

\begin{example} Consider the episode of Figure \ref{figure:multi_lock_first} and suppose that the agent follows the upper solution path and then the lower, obtaining the keys in the following order: $g$ (green), $o$ (orange), $g'$ (dark green), $m$ (magenta), $p$ (purple) and $b$ (blue). Then the proof tree whose computational content matches this behaviour is given by:
\be
\begin{mathprooftree}
\AxiomC{$\proofvdots{\pi^{\vdash G}}$}
\noLine\UnaryInfC{$\vdash G$}
\AxiomC{$\proofvdots{\pi^{G \vdash O}}$}
\noLine\UnaryInfC{$G \vdash O$}
\BinaryInfC{$\vdash O$}
\AxiomC{$\proofvdots{\pi^{O \vdash G'}}$}
\noLine\UnaryInfC{$O \vdash G'$}
\BinaryInfC{$\vdash G'$}
\AxiomC{$\proofvdots{\pi^{\vdash M}_{G'}}$}
\noLine\UnaryInfC{$G' \vdash G' \otimes M$}
\BinaryInfC{$\vdash G' \otimes M$}
\AxiomC{$\proofvdots{\pi^{M \vdash P}_{G'}}$}
\noLine\UnaryInfC{$G' \otimes M \vdash G' \otimes P$}
\BinaryInfC{$\vdash G' \otimes P$}
\AxiomC{$\proofvdots{\pi^{P \vdash B}_{G'}}$}
\noLine\UnaryInfC{$G' \otimes P \vdash G' \otimes B$}
\BinaryInfC{$\vdash G' \otimes B$}
\end{mathprooftree}
\ee
where unlabelled deduction rules are cuts. Cutting this proof tree against the proof $\pi^{G' \otimes B \vdash \mathbb{G}}$ associated to the final box gives the proof encoding the agent's strategy.
\end{example}

This example makes clear the general rule for associating a proof tree to an agent's strategy, as embodied in its behaviour: take the sequence of boxes $\beta_1, \ldots, \beta_N$ opened by the agent together with the state of the inventory $I_1, \ldots, I_N$ at the time of each opening, and cut the corresponding sequence of proofs $\pi^{\beta_i}_{I_i}$ against one another.

\section{Time adjusted performance}\label{section:time_adjusted}

In Figure \ref{figure:winrate_graph} the horizontal axis is environment steps. However, since the simplicial agent has a more complex model, each environment step takes longer to execute and the gradient descent steps are slower. In a typical experiment run on the GCP configuration in Remark \ref{remark:exp_hardware}, the training throughput of the relational agent is $1.9 \times 10^4$ environment frames per second (FPS) and that of the simplicial agent is $1.4 \times 10^4$ FPS. The relative performance gap decreases as the GPU memory and the number of IMPALA workers are increased, and this is consistent with the fact that the primary performance difference appears to be the time taken to compute the gradients (35ms vs 80ms). In Figure \ref{figure:winrate_graph_timeadjusted} we give the time-adjusted performance of the simplicial agent (the graph for the relational agent is as before) where the $x$-axis of the graph of the simplicial agent is scaled by $1.9/1.4$. 

In principle there is no reason for a significant performance mismatch: the $2$-simplicial attention can be run in parallel to the ordinary attention (perhaps with two iterations of the $1$-simplicial attention per iteration of the $2$-simplicial attention) so that with better engineering it should be possible to reduce this gap.

\begin{figure}
\begin{center}
\includegraphics[scale=0.5]{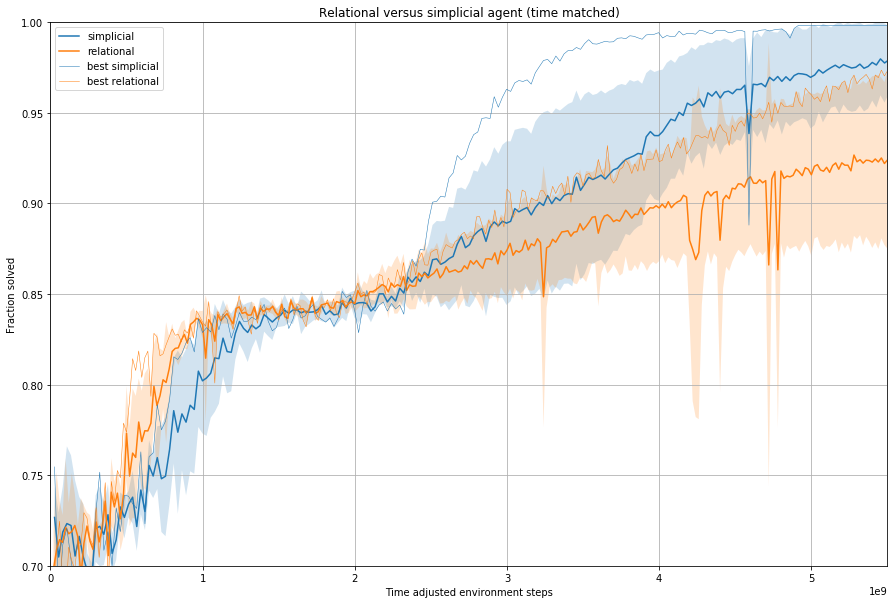}
\end{center}
\caption{Training curve of mean relational and simplicial agents on bridge BoxWorld, with time-adjusted $x$-axis for the simplicial agent.}
\label{figure:winrate_graph_timeadjusted}
\end{figure}

\section{Comparison to NTM}\label{section:compare_ntm}

The Transformer block and descendents such as the Universal Transformer \cite{ut} can be viewed as general units for computing with learned representations; in this sense they have a similar conceptual role to the Neural Turing Machine (NTM) \cite{ntm} and Differentiable Neural Computer \cite{graves_etal}. As pointed out in \cite[\S 4]{ut} one can view the Transformer as a block of parallel RNNs (one for each entity) which update their hidden states at each time step by attending to the sequence of hidden states of the other RNNs at the previous step. We expand on those remarks here in order to explain the connection between the $2$-simplicial Transformer and earlier work in the NLP literature, which is written in terms of RNNs.

We consider a NTM with content-based addressing only and no sharpening. The core of the NTM is an RNN controller with update rule
\be
h' = \operatorname{ReLU}( M + Wh + Ux + b )
\ee
where $W, U, b$ are weight matrices, $x$ is the current input symbol, $h$ is the previous hidden state, $h'$ is the next hidden state and $M$ is the output of the \emph{memory read head}
\be
M = \sum_{j=1}^N \operatorname{softmax}( K[ q, M_1 ], \ldots, K[ q, M_N ] )_j M_j
\ee
where there are $N$ memory slots containing $M_1, \ldots M_N$, $q$ is a query generated from the hidden state of the RNN by a weight matrix $q = Z h$, and $K[u,v] = (u \cdot v)/(\|u\| \|v\|)$. We omit the mechanism for writing to the memory here, since it is less obvious how that relates to the Transformer; see \cite[\S 3.2]{ntm}. Note that while we can view $M_j$ as the ``hidden state'' of memory slot $j$, the controller's hidden state and the hidden states of the memory slots play asymmetric roles, since the former is updated with a feedforward network at each time step, while the latter is not.

The Transformer with shared transition functions between layers is analogous to a NTM with this asymmetry removed: there is no longer a separate recurrent controller, and every memory slot is updated with a feedforward network in each timestep. To explain, view the entity representations $e_1, \ldots, e_N$ of the Transformer as the hidden states of $N$ parallel RNNs. The new representation is
\be
e'_i = \operatorname{LayerNorm}( g_\theta( A ) + e_i )
\ee
where the attention term is
\be
A = \sum_{j=1}^N \operatorname{softmax}( q_i \cdot k_1, \ldots, q_i \cdot k_N ) v_j
\ee
and $q_i = Z e_i$ is a query vector obtained by a weight matrix from the hidden state, the $k_j = K e_j$ are key vectors and $v_j = V e_j$ is the value vector. Note that in the Transformer the double role of $M_j$ in the NTM has been replaced by two separate vectors, the key and value, and the cosine similarity $K[-,-]$ has been replaced by the dot product.

Having now made the connection between the Transformer and RNNs, we note that the second-order RNN \cite{highorderrec,pollack,firstvsecond,secondorder} and the similar multiplicative RNN \cite{sutskever,irsoy} have in common that the update rule for the hidden state of the RNN involves a term $V(x \otimes h)$ which is a linear function of the tensor product of the current input symbol $x$ and the current hidden state $h$. One way to think of this is that the weight matrix $V$ maps inputs $x$ to linear operators on the hidden state. In \cite{socher2013} the update rule contains a term $V( e_1 \otimes e_2 )$ where $e_1, e_2$ are entity vectors, and this is directly analogous to our construction.


\section{Large epsilon RMSProp}\label{section:rmsprop}

As originally presented in \cite{rmsprop} the optimisation algorithm RMSProp is a mini-batch version of Rprop, where instead of dividing by a different number in every mini-batch (namely, the absolute value of the gradient) we force this number to be similar for adjacent mini-batches by keeping a moving average of the square of the gradient. In more detail, one step Rprop is computed by the algorithm
\begin{align*}
r_i &\leftarrow g_i^2\\
x_i &\leftarrow x_i - \frac{\kappa g_i}{\sqrt{r_i + \varepsilon}}
\end{align*}
where $\kappa$ is the learning rate, $x_i$ is a weight, $g_i$ is the associated gradient and $\varepsilon$ is a small constant (the TensorFlow default value is $10^{-10}$) added for numerical stability. The idea of Rprop is to update weights using only the \emph{sign} of the gradient: every weight is updated by the same absolute amount $\kappa$ in each step, with only the sign $g_i / \sqrt{r_i} = g_i / |g_i|$ of the update varying with $i$. The algorithm RMSprop was introduced as a refinement of Rprop:
\begin{align*}
r_i &\leftarrow p r_i + (1-p) g_i^2\\
x_i &\leftarrow x_i - \frac{\kappa  g_i}{\sqrt{r_i + \varepsilon}}
\end{align*}
where $p$ is the \emph{decay rate} (in our experiments the value is $0.99$). Clearly Rprop is the $p \rightarrow 0$ limit of RMSprop. For further background see \cite[\S 8.5.2]{dlbook}.
\\

In recent years there has been a trend in the literature towards using RMSprop with large values of the hyperparameter $\varepsilon$. For example in  \cite{zambaldi} RMSProp is used with $\varepsilon = 0.1$, which is also one of the range of values in \cite[Table D.1]{impala} explored by population based training \cite{pbt}. This ``large $\varepsilon$ RMSProp'' seems to have originated in \cite[\S 8]{inceptionv3}. To understand what large $\varepsilon$ RMSProp is doing, let us rewrite the algorithm as
\begin{align*}
r_i &\leftarrow p r_i + (1-p) g_i^2\\
x_i &\leftarrow x_i - \frac{\kappa g_i}{\sqrt{r_i}} \cdot \frac{1}{\sqrt{1 + \varepsilon/r_i}}\\
    &= x_i - \frac{\kappa g_i}{\sqrt{r_i}} S\Big[\frac{\sqrt{r_i}}{\sqrt{\varepsilon}}\Big]
\end{align*}
where $S$ is the sigmoid $S(u) = u/\sqrt{1 + u^2}$ which asymptotes to $1$ as $u \rightarrow +\infty$ and is well-approximated by the identity function for small $u$. We see a new multiplicative factor $S(\sqrt{r_i/\varepsilon})$ in the optimisation algorithm. Note that $\sqrt{r_i}$ is a moving average of $|g_i|$. Recall the original purpose of Rprop was to update weights using only the sign of the gradient and the learning rate, namely $\kappa g_i/\sqrt{r_i}$. The new $S$ factor in the above reinserts the size of the gradient, but scaled by the sigmoid to be in the unit interval.

In the limit $\varepsilon \rightarrow 0$ we squash the outputs of the sigmoid up near $1$ and the standard conceptual description of RMSProp applies. But as $\varepsilon \rightarrow 1$ the sigmoid $S(\sqrt{r_i})$ has the effect that for large stable gradients we get updates of size $\kappa$ and for small stable gradients we get updates of the same magnitude as the gradient. In conclusion, large $\varepsilon$ RMSprop is a form of RMSprop with \emph{smoothed gradient clipping} \cite[\S 10.11.1]{dlbook}.


\bibliographystyle{amsalpha}

\begin{thebibliography}{BH}

\bibitem{alain}
G.~Alain and Y.~Bengio, \textsl{Understanding intermediate layers using linear classifier probes}, in Proceedings of the International Conference on Learning Representations (ICLR), 2016. 

\bibitem{aristotle}
Aristotle, \textsl{Sophistical refutations}, translated by W.~A.~Pickard-Cambridge, \textsl{Complete works of Aristotle, Volume 1: The revised Oxford translation}, J. Barnes (ed.) Princeton University Press, 1984.

\bibitem{layernorm}
J.~Lei Ba, J.~R.~Kiros and G.~E.~Hinton, \textsl{Layer normalization}, preprint arXiv:1607.06450, 2016.

\bibitem{bahdanau}
D.~Bahdanau, K.~Cho and Y.~Bengio, \textsl{Neural machine translation by jointly learning to align and translate}, in Proceedings of the International Conference on Learning Representations (ICLR), 2015.

\bibitem{barrett}
D.~G.~T.~Barrett, F.~Hill, A.~Santoro, A.~S.~Morcos and T.~Lillicrap, \textsl{Measuring abstract reasoning in neural networks}, in Proceedings of the International Conference on Machine Learning (ICML), 2018.

\bibitem{behrens}
T.~E.~J.~Behrens, T.~H.~Muller, J.~C.~R.~Whittington, S.~Mark, A.~B.~Baram, K.~L.~Stachenfeld and Z.~Kurth-Nelson, \textsl{What is a cognitive map? Organizing knowledge for flexible behavior}, Neuron 100, pp. 490--509, 2018.

\bibitem{bellmund}
J.~L.~S.~Bellmund, P.~Gardenfors, E.~I.~Moser and C.~F.~Doeller, \textsl{Navigating cognition: Spatial codes for human thinking}, Science 362, 2018.

\bibitem{simplex2vec}
J.~C.~W.~Billings, M.~Hu, G.~Lerda, A.~N.~Medvedev, F.~Mottes, A.~Onicas, A.~Santoro and G.~Petri, \textsl{Simplex2Vec embeddings for community detection in simplicial complexes}, preprint arXiv:1906.09068, 2019.

\bibitem{boole}
G.~Boole, \textsl{The Mathematical Analysis of Logic: Being an Essay Towards a Calculus of Deductive Reasoning}, Cambridge University Press, 2009 (first published 1847).

\bibitem{openaigym}
G.~Brockman, V.~Cheung, L.~Pettersson, J.~Schneider, J.~Schulman, J.~Tang and W.~Zaremba, \textsl{OpenAI Gym}, preprint arXiv:1606.01540, 2016.

\bibitem{chazal}
F.~Chazal and B.~Michel, \textsl{An introduction to topological data analysis: fundamental and practical aspects for data scientists}, preprint arXiv:1710.04019, 2017.

\bibitem{sparse}
R.~Child, S.~Gray, A.~Radford and I.~Sutskever, \textsl{Generating long sequences with Sparse Transformers}, preprint arXiv:1904.10509, 2019.

\bibitem{cho2015}
K.~Cho, A.~Courville and Y.~Bengio, \textsl{Describing multimedia content using attention-based encoder-decoder networks}, in IEEE Transactions on Multimedia 17(11), 2015.

\bibitem{cliftmurfet2}
J.~Clift and D.~Murfet, \textsl{Cofree coalgebras and differential linear logic}, preprint arXiv:1701.01285v2, 2019.

\bibitem{repo}
J.~Clift, D.~Doryn, D.~Murfet and J.~Wallbridge, \url{https://github.com/dmurfet/2simplicialtransformer/}.

\bibitem{constantinescu}
A.~O.~Constantinescu, J.~X.~O'Reilly and T.~E.~J.~Behrens, \textsl{Organising conceptual knowledge in humans with a gridlike code}, Science 352, Issue 6292, pp. 1464--1468, 2016.

\bibitem{ut}
M.~Dehghani, S.~Gouws, O.~Vinyals, J.~Uszkoreit and \L{}.~Kaiser, \textsl{Universal transformers}, in Proceedings of the International Conference on Learning Representations (ICLR), 2019. 

\bibitem{neurallogicm}
H.~Dong, J.~Mao, T.~Lin, C.~Wang, L.~Li and D.~Zhou, \textsl{Neural logic machines}, in Proceedings of the International Conference on Learning Representations (ICLR), 2019.

\bibitem{epstein}
R.~A.~Epstein, E.~Z.~Patai, J.~B.~Julian and H.~J.~Spiers, \textsl{The cognitive map in humans: spatial navigation and beyond}, Nature Neuroscience, 2017.

\bibitem{impala}
L.~Espeholt, H.~Soyer, R.~Munos, K.~Simonyan, V.~Mnih, T.~Ward, Y.~Doron, V.~Firoiu, T.~Harley, I.~Dunning, S.~Legg and K.~Kavukcuoglu, \textsl{IMPALA: scalable distributed deep-RL with importance weighted actor-learner architectures}, preprint arXiv:1802.01561, 2018.

\bibitem{frege_sensedenotation}
G.~Frege, \textsl{On sense and denotation (\"Uber Sinn und Bedeutung)}, Zeitschrift f\"ur Philosophie und philosophische Kritik, vol. 100 pp. 25--50, 1892.

\bibitem{highorderrec}
C.~L.~Giles, G.~Z.~Sun, H.~H.~Chen, Y.~C.~Lee and D.~Chen, \textsl{Higher order recurrent networks and grammatical inference}, in Advances in Neural Information Processing Systems (NeurIPS), 1989.

\bibitem{secondorder}
C.~L.~Giles, D.~Chen, C.~B.~Miller, H.~H.~Chen, G.~Z.~Sun and Y.~C.~Lee, \textsl{Second-order recurrent neural networks for grammatical inference}, in IJCNN-91-Seattle International Joint Conference on Neural Networks Vol. 2, pp. 273--281 1991.

\bibitem{mpnn}
J.~Gilmer, S.~S.~Schoenholz, P.~F.~Riley, O.~Vinyals and G.~E.~Dahl, \textsl{Neural message passing for quantum chemistry}, in Proceedings of the International Conference on Machine Learning (ICML), 2017.

\bibitem{girard_llogic}
J.~-Y.~Girard, \textsl{Linear logic}, Theoretical Computer Science 50 (1), pp. 1--102, 1987.

\bibitem{dlbook}
I.~Goodfellow, Y.~Bengio and A.~Courville, \textsl{Deep learning}, MIT Press, 2016.

\bibitem{firstvsecond}
M.~W.~Goudreau, C.~L.~Giles, S.~T.~Chakradhar and D.~Chen, \textsl{First-order versus second-order single-layer recurrent neural networks}, IEEE Transactions on Neural Networks, 5(3), pp. 511--513, 1994.

\bibitem{ntm}
A.~Graves, G.~Wayne and I.~Danihelka, \textsl{Neural Turing machines}, preprint arXiv:1410.5401, 2014.

\bibitem{graves_etal}
A.~Graves, G.~Wayne, M.~Reynolds, T.~Harlye, I.~Danihelka, A.~Grabska-Barwin\'ska, S.~G.~Colmenarejo, E.~Grefenstette, T.~Ramalho, J.~Agapiou, A.~P.~Badia, K.~M.~Hermann, Y.~Zwols, G.~Ostrovski, A.~Cain, H.~King, C.~Summerfield, P.~Blunsom, K.~Kavukcuoglu and D.~Hassabis, \textsl{Hybrid computing using a neural network with dynamic external memory}, Nature 538, pp. 471--476, 2016.

\bibitem{guez}
A.~Guez, M.~Mirza, K.~Gregor, R.~Kabra, S.~Racanie\`ere, T.~Weber, D.~Raposo, A.~Santoro, L.~Orseau, T.~Eccles, G.~Wayne, D.~Silver and T.~Lillicrap, \textsl{An investigation of model-free planning}, in Proceedings of the International Conference on Machine Learning (ICML), 2019.

\bibitem{henderson}
P.~Henderson, R.~Islam, P.~Bachman, J.~Pineau, D.~Precup and D.~Meger, \textsl{Deep reinforcement learning that matters}, preprint arXiv:1709.06560, 2017.

\bibitem{hestenes}
D.~Hestenes, \textsl{New foundations for classical mechanics}, 2nd edition, Kluwer Academic publishers 2002.

\bibitem{hewitt}
J.~Hewitt and C.~D.~Manning, \textsl{A structural probe for finding syntax in word representations}, in Proceedings of the 2019 Conference of the North American Chapter of the Association for Computational Linguistics: Human Language Technologies, Volume 1 (Long and Short Papers), pp. 4129--4138, 2019.

\bibitem{hopfield}
J.~J.~Hopfield, \textsl{Neural networks and physical systems with emergent collective computational abilities}, Proceedings of the National Academy of Sciences of the USA, vol. 79 no. 8 pp. 2554--2558, 1982.

\bibitem{hylandgame}
M.~Hyland, \textsl{Game semantics}, Semantics and logics of computation 14, 1997.

\bibitem{irsoy}
O.~Irsoy and C.~Cardie, \textsl{Modeling compositionality with multiplicative recurrent neural networks}, in Proceedings of the International Conference on Learning Representations (ICLR), 2014. 

\bibitem{pbt}
M.~Jaderberg, V.~Dalibard, S.~Osindero, W.~M.~Czarnecki, J.~Donahue, A.~Razavi, O.~Vinyals, T.~Green, I.~Dunning, K.~Simonyan and C.~Fernando, \textsl{Population based training of neural networks}, preprint arXiv:1711.09846, 2017.


\bibitem{koh}
P.~W.~Koh and P.~Liang, \textsl{Understanding Black-box predictions via influence functions}, in Proceedings of the International Conference on Machine Learning (ICML), 2017. 

\bibitem{lambiotte}
R.~Lambiotte, M.~Rosvall and I.~Scholtes, \textsl{From networks to optimal higher-order models of complex systems}, Nature physics, 15 pp. 313--320, 2019.

\bibitem{rllib}
E.~Liang, R.~Liaw, R.~Nishihara, R.~Moritz, R.~Fox, K.~Goldberg, J.~E.~Gonzalez, M.~I.~Jordan and I.~Stoica, \textsl{{RLlib}: Abstractions for distributed reinforcement learning}, in Proceedings of the International Conference on Machine Learning (ICML), 2018.

\bibitem{liu}
Y.~Liu, R.~J.~Dolan, Z.~Kurth-Nelson and T.~E.~J.~Behrens, \textsl{Human replay spontaneously reorganizes experience}, Cell, 178 pp. 1--13, 2019.

\bibitem{beware}
A.~Macdonald, \textsl{Sobczyk’s simplicial calculus does not have a proper foundation}, preprint arXiv:1710.08274, 2017.

\bibitem{mackay}
D.~Mackay, \textsl{Information theory, inference and learning algorithms}, Cambridge University Press, 2003.

\bibitem{britannica2}
N.~J.~Mackintosh, \textsl{Animal learning}, Encylopaedia Britannica, \url{https://www.britannica.com/science/animal-learning/Insight-and-reasoning}, access date July 15, 2019.

\bibitem{martens}
C.~Martens, \textsl{Programming interactive worlds with linear logic}, Doctoral dissertation, Carnegie Mellon University, 2015.

\bibitem{kpot}
K.~Mavreshko (kpot), \textsl{keras-transformer}, \url{https://github.com/kpot/keras-transformer}.

\bibitem{mellies}
P-A.~Melli\`{e}s, \textsl{Categorical semantics of linear logic}, in : Interactive models of computation and program behaviour, Panoramas et Synth\`{e}ses $27$, Soci\'{e}t\'{e} Math\'{e}matique de France, 2009.

\bibitem{nguyen}
A.~Nguyen, J.~Yosinski and J.~Clune, \textsl{Multifaceted feature visualization: uncovering the different types of features learned by each neuron in deep neural networks}, in International Conference on Machine Learning (ICML), 2016. 

\bibitem{distill}
C.~Olah and S.~Carter, \textsl{Attention and Augmented Recurrent Neural Networks}, Distill, \url{http://distill.pub/2016/augmented-rnns}, 2016.

\bibitem{pollack}
J.~B.~Pollack, \textsl{The induction of dynamical recognizers}, Machine Learning, 7(2-3) pp. 227--252, 1991. 

\bibitem{ranzato}
M.~Ranzato, A.~Krizhevsky and G.~E.~Hinton. \textsl{Factored $3$-Way Restricted Boltzmann machines for modeling natural images}, AISTATS, 2010.

\bibitem{santoro}
A.~Santoro, D.~ Raposo, D.~G.~Barrett, M.~Malinowski, R.~Pascanu, P.~Battaglia and T.~Lillicrap, \textsl{A simple neural network module for relational reasoning}, in Advances in Neural Information Processing Systems (NeurIPS), 2017.

\bibitem{santoro_relational}
A.~Santoro, R.~Faulkner, D.~Raposo, J.~Rae, M.~Chrzanowski, T.~Weber, D.~Wierstra, O.~Vinyals, R.~Pascanu and T.~Lillicrap, \textsl{Relational recurrent neural networks}, in Advances in Neural Information Processing Systems (NeurIPS), 2018.

\bibitem{personal}
D.~Raposo, \textsl{Personal communication}, May 13, 2019.

\bibitem{serafini}
L.~Serafini and A.~Garcez, \textsl{Logic tensor networks: Deep learning and logical reasoning from data and knowledge}, in Proceedings of the International Workshop on Neural-Symbolic Learning and Reasoning (NeSy@HLAI), 2016. 

\bibitem{srikumar}
A.~Shrikumar, P.~Greenside and A.~Kundaje, \textsl{Learning important features through propagating activation differences}, in Proceedings of the International Conference on Machine Learning (ICML), 2017.

\bibitem{simonyan}
K.~Simonyan, A.~Vedaldi and A.~Zisserman, \textsl{Deep inside convolutional networks: visualising image classification models and saliency maps}, in Proceedings of the International Conference on Learning Representations (ICLR), 2013.

\bibitem{aristotle-logic}
R.~Smith, \textsl{Aristotle’s Logic}, The Stanford Encyclopedia of Philosophy (Summer 2019 Edition), Edward N. Zalta (ed.), \url{https://plato.stanford.edu/archives/sum2019/entries/aristotle-logic/}, 2019.

\bibitem{sobczyk}
G.~E.~Sobczyk \textsl{Simplicial calculus with geometric algebra}, in: Micali A., Boudet R., Helmstetter J. (eds) Clifford Algebras and their Applications in Mathematical Physics. Fundamental Theories of Physics, vol 47. Springer, Dordrecht, 1992.

\bibitem{socher2013}
R.~Socher, D.~Chen, C.~D.~Manning, and A.~Ng, \textsl{Reasoning with neural tensor networks for knowledge base completion}, in Advances in Neural Information Processing Systems (NeurIPS), 2013.

\bibitem{britannica}
P.~V.~Spade and J.~J.~Hintikka, \textsl{History of logic}, Encylopaedia Britannica, \url{https://www.britannica.com/topic/history-of-logic/Aristotle}, access date July 15, 2019.

\bibitem{sutskever}
I.~Sutskever, J.~Martens and G.~E.~Hinton, \textsl{Generating text with recurrent neural networks}, in Proceedings of the International Conference on Machine Learning (ICML), 2011.

\bibitem{suttonbarto}
R.~S.~Sutton and A.~G.~Barto, \textsl{Reinforcement learning: an introduction}, 2nd edition, MIT Press, 2018.

\bibitem{inceptionv3}
C.~Szegedy, V.~Vanhoucke, S.~Ioffe, J.~Shlens and Z.~Wojna, \textsl{Rethinking the Inception Architecture for Computer Vision}, Conference on Computer Vision and Pattern Recognition (CVPR), 2016.

\bibitem{rmsprop}
T.~Tieleman and G.~Hinton, \textsl{Lecture 6.5 - RmsProp: Divide the gradient by a running average of its recent magnitude}, COURSERA: Neural Networks for Machine Learning, 2012.

\bibitem{attention}
A. Vaswani, N.~Shazeer, N.~Parmar, J.~Uszkoreit, L.~Jones, A.~N.~Gomez, \L{}.~Kaiser and I.~Polosukhin, \textsl{Attention is all you need}, in Advances in Neural Information Processing Systems (NeurIPS), 2017.

\bibitem{alphastar}
O.~Vinyals, I.~Babuschkin, J.~Chung, M.~Mathieu, M.~Jaderberg, W.~Czarnecki, A.~Dudzik, A.~Huang, P.~Georgiev, R.~Powell, T.~Ewalds, D.~Horgan, M.~Kroiss, I.~Danihelka, J.~Agapiou, J.~Oh, V.~Dalibard, D.~Choi, L.~Sifre, Y.~Sulsky, S.~ Vezhnevets, J.~Molloy, T.~Cai, D.~Budden, T.~Paine, C.~Gulcehre, Z.~Wang, T.~Pfaff, T.~Pohlen, Y.~Wu, D.~Yogatama, J.~Cohen, K.~McKinney, O.~Smith, T.~Schaul, T.~Lillicrap, C.~ Apps, K.~Kavukcuoglu, D.~Hassabis and D.~Silver, \textsl{AlphaStar: Mastering the Real-Time Strategy Game StarCraft II}, \url{https://deepmind.com/blog/alphastar-mastering-real-time-strategy-game-starcraft-ii/}, 2019.

\bibitem{whittington}
J.~C.~R.~Whittington, T.~H.~Muller, S.~Mark, C.~Barry and T.~E.~J.~Behrens, \textsl{Generalisation of structural knowledge in the hippocampal-entorhinal system}, in Advances in Neural Information Processing Systems (NeurIPS), 2018.

\bibitem{zambaldi}
V.~Zambaldi, D.~Raposo, A.~Santoro, V.~Bapst, Y.~Li, I.~Babuschkin, K.~Tuyls, D.~Reichert, T.~Lillicrap, E.~Lockhart, M.~Shanahan, V.~Langston, R.~Pascanu, M.~Botvinick, O.~Vinyals and P.~Battaglia, \textsl{Deep reinforcement learning with relational inductive biases}, in Proceedings of the International Conference on Learning Representations (ICLR), 2019.

\end{thebibliography}

\end{document}